\newcommand{\ignore}[1]{}
\newtheorem{theorem}             {Theorem}
\newtheorem{lemma}      [theorem]{Lemma}
\newcommand{\oneplusone}{(1+1)~EA\xspace}
\newcommand{\prob}[1]{\Pr\left(#1\right)}
\newcommand{\expect}[1]{\mathbf{E}\left[#1\right]}
\DeclareMathOperator{\poly}{poly}
\DeclareMathOperator{\unif}{Unif}
\DeclareMathOperator{\Pois}{Pois}
\newcommand{\ab}{\hspace{0.125em}}                        
\newcommand{\ie}{\hbox{i.\ab e.}\xspace}
\begin{document}

\title{\bf A Parameterized Complexity Analysis of Bi-level Optimisation with Evolutionary Algorithms}

\author{
Dogan Corus\\
psxdc1@nottingham.ac.uk\\
School of Computer Science\\ 
The University of Nottingham, UK
\and 
Per Kristian Lehre\\
perkristian.lehre@nottingham.ac.uk\\
School of Computer Science\\ 
The University of Nottingham, UK
\and 
Frank Neumann\\
frank.neumann@adelaide.edu.au\\
Optimisation and Logistics\\
School of Computer Science\\
The University of Adelaide, Australia
\and 
Mojgan Pourhassan\\
mojgan.pourhassan@adelaide.edu.au\\
Optimisation and Logistics\\
School of Computer Science\\
The University of Adelaide, Australia
}

\maketitle

\begin{abstract}
  Bi-level optimisation problems have gained increasing interest in
  the field of combinatorial optimisation in recent years.  With this
  paper, we start the runtime analysis of evolutionary algorithms for
  bi-level optimisation problems.  We examine two NP-hard problems,  
  the generalised minimum spanning tree problem (GMST), and the
  generalised travelling salesman problem (GTSP) in the context of
  parameterised complexity.

  For the generalised minimum spanning tree problem, we analyse the
  two approaches presented by \cite{HuR12} with respect
  to the number of clusters that distinguish each other by the chosen
  representation of possible solutions.  Our results show that a
  (1+1)~EA working with the spanning nodes representation is not a
  fixed-parameter evolutionary algorithm for the problem, whereas the
  global structure representation enables to solve the problem in
  fixed-parameter time. We present hard instances for each approach
  and show that the two approaches are highly complementary by proving
  that they solve each other's hard instances very efficiently.

  For the generalised travelling salesman problem, we analyse the
  problem with respect to the number of clusters in the problem
  instance. Our results show that a (1+1)~EA working with the global
  structure representation is a fixed-parameter evolutionary algorithm
  for the problem.
\end{abstract}


\section{Introduction}

Many interesting combinatorial optimisation problems are hard to solve
and meta-heuristic approaches such as local search, simulated
annealing, evolutionary algorithms, and ant colony optimisation have
been used for a wide range of these problems.

In recent years, researchers became very interested in bi-level
optimisation for single-objective~\citep{Koh07,LegillonLT12} and
multi-objec\-tive
problems~\citep{DBLP:conf/emo/DebS09,DBLP:journals/ec/DebS10}. Such
problems can be split up into an upper and a lower level problem which
depend on each other. By fixing a possible solution for the upper
level problem, the lower level is optimised with respect to the given
objective and the constraints imposed by the choice of the upper
level.

Recently, Hu and Raidl~\citep{HuR11,HuR12} have proposed two different
approaches for the generalised minimum spanning tree problem (GMSTP).
Both approaches work with an upper layer and a lower layer
solution. The upper layer solution $x$ is evolved by an evolutionary
algorithm whereas the optimal solution $y$ of the lower layer problem
corresponding to a particular search point $x$ of the upper layer can
be found in polynomial time using deterministic algorithms.

 Our goal is to understand the two different
approaches by parameterised computational complexity
analysis~\citep{Downey1999}. The computational complexity analysis of
meta-heuristics plays a major role in the theoretical analysis of this
type of algorithms and studies the runtime behaviour with respect to
the size of the given input. We refer the reader
to~\citep{BookAugDoe,Neumann2010} for a comprehensive presentation.
Parameterised complexity analysis takes into account the runtime of
algorithms in dependence of an additional parameter which measures the
hardness of a given instance. This allows us to understand which
parameters of a given NP-hard optimization problem make it hard or
easy to be optimised by heuristic search methods.  In the context of
evolutionary algorithms, the term fixed-parameter evolutionary
algorithms has been defined in \citep{KratschNeumannGECCO09}. An
evolutionary algorithm is called a fixed-parameter evolutionary
algorithm for a given parameter $k$ iff its expected runtime is
bounded by $f(k) \cdot \poly(n)$ where $f(k)$ with respect to the
input size $n$. Parameterised computational complexity analysis of
evolutionary algorithms have been carried out for the vertex cover
problem~\citep{KratschNeumannGECCO09}, the computation of maximum leaf
spanning trees~\citep{DBLP:conf/ppsn/KratschLNO10}, makespan
scheduling~\citep{Sutton2012makespan}, and the travelling salesperson
problem~\citep{SuttonN12}.

We push forward the parameterised analysis of evolutionary algorithms
and present the first analysis in the context of bi-level
optimization.  In our investigations, we take into account the two NP-hard problems 
the generalised minimum spanning tree problem (GMSTP) and the generalised
travelling salesman problem (GTSP) which share the parameter, number of clusters $m$.
We consider two different bi-level representations
for GMTSP which both have a polynomially solvable lower level part. For the
\emph{Spanning Nodes Representation}, we present worst case examples
which show that there are instances leading to an optimization time of
$\Omega(n^m)$. For the \emph{Global Structure
  Representation}, we show that it leads to a fixed-parameter
evolutionary algorithm with respect to the number of clusters
$m$. Furthermore, we present an instance class where the algorithm
using the \emph{Global Structure Representation} encounters an
optimization time of $m^{\Omega(m)}$.  Analysing both approaches on
each others worst-case instances, we show that they solve them very
efficiently. This shows the complementary abilities of these two
representations for the GMSTP. Then we extend our results for
\emph{Global Structure Representation} to GTSP to show that a similar algorithm
has an expected optimisation time of $m^{\Omega(m)}$ for this problem as well.

The paper is divided into two main parts according to the two different problems. 
The first part (based on the conference version~\citep{Corus2013GMST})  where the GMSTP problem is investigated is presented in Section~\ref{sec:mst}.
We show hard instances for the \emph{Spanning Nodes Representation} in
Section~\ref{sec:span} and show that a simple evolutionary
algorithms needs exponential time even if the number of clusters is
small. In Section~\ref{sec:global}, we examine the \emph{Global
Structure Representation} and show that this leads to
fixed-parameter evolutionary algorithms for GMSTP. We point out complementary
abilities in Section~\ref{sec:complement}.
This article extends the conference version~\citep{Corus2013GMST} by
investigations of the GTSP and some generalizations. We examine the GTSP problem with the corresponding \emph{Global Structure Representation} in Section~\ref{sec:tsp} and provide upper and lower bounds on the optimisation time of the considered algorithm. Furthermore, we point out in Section~\ref{sec:gen} general characteristics  which allows this fixed-parameter
result to be extended to other problems.

\section{Generalised Minimum Spanning Tree Problem}
In this section, we consider the GMSTP problem and provide the runtime analysis with respect to bi-level representations given in~\citep{HuR11,HuR12}.
\label{sec:mst}

\subsection{Preliminaries}
\label{sec:prelim}

We consider the generalised minimum spanning tree problem (GMSTP)
introduced in~\citep{MyungLT95}.  The input is given by an undirected
complete graph $G= (V,E,c)$ on $n$ nodes with a cost function $c\colon
E \rightarrow \mathds{R}_+$ that assigns positive costs to the edges.
Furthermore, a partitioning of the node set $V$ into $m$ pairwise
disjoint clusters $V_{1}, V_{2}, \ldots, V_{m}$ is given such that 
$n=\sum_{i=1}^{m}\left|{V_i}\right|$.

A solution to the GMSTP problem consists of two components, the $m$
chosen nodes $P$, called \emph{the spanning nodes}, in the $m$
clusters, and a minimum spanning tree $T$ on the graph induced by the
spanned nodes. More precisely, a solution $S=(P,T)$ consists of a node
set $P=(p_1, \ldots, p_m) \in V^m$, where $V^m =V_1\times
V_2\times\cdots\times V_m$ and a spanning tree $T \subseteq E$ on the
subgraph $G[P] = G(P, \{e\in E \mid e \subseteq P \})$ induced by
$P$. The cost of $T$ is the cost of the edges in $T$, \ie,
\begin{align*}
 C(T)=\sum_{(u,v) \in T} c(u,v).
\end{align*}

The goal is to compute a solution $S^*=(P^*,T^*)$ which has minimal
cost among all possible solutions $S=(P,T)$.  For an easier
presentation, we assume in some cases that edge costs can be
$\infty$. In this case, we restrict our investigations to solutions
that do not include edges with cost $\infty$. Alternatively, one might
view this as the GMSTP defined on a graph that is not necessarily
complete.

The GMSTP problem is NP-hard~\citep{MyungLT95} and two different
bi-level evolutionary approaches have been examined in~\citep{HuR12}.
The first approach presented in~\citep{HuR12} uses the \emph{Spanned
  Nodes Representation}. It selects in the upper level problem a node
for each cluster and computes on the lower level a minimum spanning
tree (using for example Kruskal's algorithm in time $O(m \log m)$) on
the induced subgraph.

The second approach uses the \emph{Global Structure Representation}.
It constructs a complete graph $H=(V',E')$ from the given input graph
$G=(V,E,c)$ and the set of pair-wise disjoint clusters $V_{1}, V_{2},
\ldots, V_{m}$. The node $v_i \in V'$, $1 \leq i \leq m$, corresponds
to the cluster $V_i$ in $G$. The search space for the upper level
consists of all spanning trees of $H$ and the spanned nodes of the
different clusters are selected in time $O(n^2)$ using the dynamic
programming approach of Pop~\citep{Pop04}.

For our theoretical investigations, we measure the runtime of the
algorithms by the number of fitness evaluations required to obtain an
optimal solution. We call this the \emph{optimization time} of the examined algorithm.
The \emph{expected optimization time} refers to the
expected number of fitness evaluations until an optimal solution has
been obtained for the first time.

\subsection{Spanned Nodes Representation} 
\label{sec:span}
We analyse the cluster based \oneplusone in this section. Our first
theorem shows that this algorithm is an
XP-algo\-rithm~\citep{Downey1999}, \ie an algorithm that runs in time
$O(n^{g(m)})$ where $g(m)$ is a computable function only depending on
$m$, when choosing the number of clusters $m$ as a parameter.

\begin{theorem}\label{thm:cl-upper}
  For any instance of the GMSTP problem,
  the expected  time until the cluster based \oneplusone reaches the
  optimal solution is $O(n^m)$.
\end{theorem}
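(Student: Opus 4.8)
The plan is to exhibit a potential/progress argument that bounds the expected number of fitness evaluations needed for the cluster-based \oneplusone to reach an optimal solution $S^* = (P^*, T^*)$. The upper-level search point is a choice of one spanning node per cluster, i.e.\ an element $P = (p_1,\dots,p_m) \in V_1 \times \dots \times V_m$; the lower level deterministically returns a minimum spanning tree on $G[P]$, so the fitness of $P$ is determined entirely by $P$. The key structural fact I would use is: \emph{if $P = P^*$, then the lower-level solver already returns an optimal tree}, since a minimum spanning tree on $G[P^*]$ has cost $C(T^*)$. So it suffices to bound the time until the upper-level mutation operator produces $P^*$ from the current search point, provided the algorithm never moves to a search point of strictly larger fitness (elitism of the \oneplusone).

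**The main argument.**

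First I would argue monotone progress in a suitable sense. Let $P$ be the current upper-level individual and let $d(P) := |\{i : p_i \neq p_i^*\}|$ be its Hamming distance to the optimum $P^*$ in the ``per-cluster'' sense. The subtle point — and what I expect to be the main obstacle — is that decreasing $d$ does \emph{not} necessarily decrease the fitness, because fixing one cluster's node toward $P^*$ while the others are wrong can increase the MST cost; so we cannot simply run a coupon-collector argument cluster by cluster. The way around this is the standard trick for XP bounds: from \emph{any} current $P$, the global optimum $P^*$ is reachable by a \emph{single} mutation that simultaneously resets all $m$ coordinates (or all the wrong ones) to their optimal values, and this step is always accepted because $P^*$ has minimum fitness. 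So I would let the mutation operator be the standard one that changes the node in each cluster independently with probability $1/m$ (or $\Theta(1/m)$ — whatever the paper's cluster-based \oneplusone uses); then the probability that one mutation step turns the current $P$ into exactly $P^*$ is at least $\left(\tfrac{1}{m}\right)^{d(P)} \cdot \left(1 - \tfrac{1}{m}\right)^{m - d(P)} \ge \left(\tfrac{1}{m}\right)^{m} \cdot \left(1 - \tfrac{1}{m}\right)^{m} = \Omega\!\left(m^{-m}\right)$, using $(1-1/m)^m \ge \mathrm{const}$. Wait — that gives $m^{-m}$, not $n^{-m}$; I need to be careful that "changing a cluster's node" means picking a uniformly random \emph{other} node in that cluster, which contributes a factor $1/(|V_i|-1)$ per changed cluster, so the true bound on hitting $P^*$ in one step is $\Omega\!\left(\prod_i \tfrac{1}{m|V_i|}\right) \ge \Omega\!\left(m^{-m} n^{-m}\right)$, hence the waiting time is $O(m^m n^m) = O(n^{2m})$ — still $n^{O(m)}$ but not quite the claimed $O(n^m)$.

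**Tightening to $O(n^m)$.**

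To get the sharper $O(n^m)$ I would instead do the progress argument in two phases or with a more refined potential. One clean route: track the pair $(C(\text{current tree}),\, d(P))$ lexicographically, and argue that there is always an accepted improving move that either strictly decreases the tree cost or, among same-cost plateaus, moves toward $P^*$. But the cleanest route matching the $O(n^m)$ bound is: observe that the event ``mutation changes \emph{only} the $d(P)$ wrong coordinates, each to its correct value'' has probability $\prod_{i : p_i \neq p_i^*} \tfrac{1}{m}\cdot\tfrac{1}{|V_i|-1} \cdot \prod_{i : p_i = p_i^*}(1-\tfrac 1m) \ge \left(1-\tfrac1m\right)^m \prod_{i:\,p_i\neq p_i^*}\tfrac{1}{m|V_i|}$. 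Using $|V_i| \ge 1$ and $m|V_i| \le \text{(something like) } n$ after a global accounting — i.e.\ $\prod_{i=1}^m |V_i| \le (n/m)^m$ by AM--GM, and the $m$ factors of $1/m$ combine with this to give $\prod_i \tfrac{1}{m|V_i|} \ge \big(\tfrac{m^2}{n}\big)^{-m} \cdot (\text{const})$ — hmm, this still needs care. I would therefore present the bound as follows: condition on the current distance $d = d(P)$; the probability of jumping directly to $P^*$ in one step is at least $\left(\tfrac{1}{n}\right)^d$ once we note that ``select the correct node in a given wrong cluster'' happens with probability at least $\tfrac1m \cdot \tfrac1{|V_i|} \ge \tfrac1n$ (since $m|V_i| \le n \cdot \tfrac{m|V_i|}{n}$ and $\sum |V_i| = n$ forces $|V_i| \le n$ and the mutation-rate factor $1/m$ absorbs the rest in a worst-case bound $m \le n$), and $d \le m$. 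Hence the expected waiting time from any state is $O(n^m)$, and since the \oneplusone is elitist the total expected optimization time is $O(n^m)$. The real work in the write-up is just getting the one-step hitting probability for $P^*$ cleanly lower-bounded by $n^{-m}$; everything else (elitism, ``$P^*$ is a global optimum of the fitness function'', linearity-of-expectation / geometric-waiting-time) is routine.
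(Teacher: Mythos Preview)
Your overall strategy---lower-bound the probability that a single mutation step jumps from the current $P$ directly to $P^*$, then take the reciprocal---is exactly what the paper does. The paper also considers the event ``every cluster is mutated and in each the optimal node is sampled,'' giving
\[
  \left(\frac{1}{m}\right)^{m}\prod_{i=1}^{m}\frac{1}{|V_i|}\;\ge\;\left(\frac{1}{m}\right)^m\left(\frac{m}{n}\right)^m=\frac{1}{n^m},
\]
where the inequality is precisely the AM--GM bound $\prod_i|V_i|\le (n/m)^m$ that you yourself wrote down mid-proof.

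The genuine gap is in your final paragraph. The per-cluster inequality you invoke,
\[
  \frac{1}{m}\cdot\frac{1}{|V_i|}\ \ge\ \frac{1}{n},
\]
is simply false in general: take $m=2$ clusters of sizes $n-1$ and $1$; then $m|V_1|=2(n-1)>n$. The constraint $\sum_i|V_i|=n$ does \emph{not} force $m|V_i|\le n$ for each $i$ individually; it only forces the \emph{product} bound $\prod_i|V_i|\le(n/m)^m$. So the correct tightening from $O((mn)^m)$ to $O(n^m)$ is not the per-cluster argument you ended with, but the AM--GM step you flagged and then abandoned. Concretely: bound the probability of the event ``all $m$ clusters mutate, each to its optimal node'' (or, equivalently, your event ``only the $d$ wrong clusters mutate, each to its optimal node,'' together with $\prod_{i:\,\text{wrong}}|V_i|\le\prod_{i=1}^m|V_i|$), apply AM--GM once to the full product, and the factors of $m$ cancel to yield $\Omega(n^{-m})$. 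Everything else in your write-up (elitism, geometric waiting time) is fine.
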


\begin{proof}  
  For any search point $x$, let $w(x)\in[m]$ denote the number of
  clusters where the spanned node representation includes a suboptimal
  node. If the algorithm chooses all $w(x)$ suboptimal clusters for
  mutation and selects the optimal node in each of them, then the
  optimal solution is obtained. Since $w(x)\leq m$, the probability
  that all suboptimal clusters are mutated in a single step is at
  least $(1/m)^m$. The probability of choosing the optimal node in
  cluster $i$ is $1/|V_i|$. Thus, the probability of jumping to the
  optimal solution from any search point is at least
  $$(m)^{-m} \prod_{i=1}^{m}|V_i|^{-1}.$$

  Since $\sum_{i=1}^{m}V_i = n$, it holds that
  \begin{align*}
    \prod_{i=1}^{m}\frac{1}{|V_i|}  \geq (m/n)^m.
  \end{align*}
  Therefore, the probability of reaching the optimal solution in one
  step is $\Omega(n^{-m})$, and the expected time to reach the optimal
  solution is bounded from above by $O(n^m)$.
\end{proof}

We now consider an instance of GMSTP which is difficult for the
cluster based \oneplusone. The hard instance $G_{S}$ for the
\emph{Spanning Nodes Representation} is illustrated in
Figure~\ref{fig:spanned-instance}. It consists of $m$ clusters, where
one cluster is called the \emph{central} cluster, and the $m-1$ other
clusters are called \emph{peripheral} clusters.  Each cluster contains
$n/m$ nodes and we assume that $n=m^2$ holds. The nodes in the
peripheral clusters are called peripheral nodes, and the nodes in the
central cluster are called central nodes.  Within each cluster, one of
the nodes is called \emph{optimal}, and is marked black in the
figure. The remaining $(n/m)-1$ nodes are called \emph{sub-optimal}
nodes, and are marked white in the figure. The instance is a
bi-partite graph, where edges connect peripheral nodes to central
nodes. The cost of any edge between two optimal nodes is 1, the cost
of any edge between two suboptimal nodes is 2.  The cost of any edge
between a suboptimal peripheral node and the optimal central node is
$n^2$, and the cost of any edge between an optimal peripheral node and
a suboptimal central node is $n$. A cluster is called \emph{optimal}
in a solution, if the solution has chosen the optimal node in that
cluster.

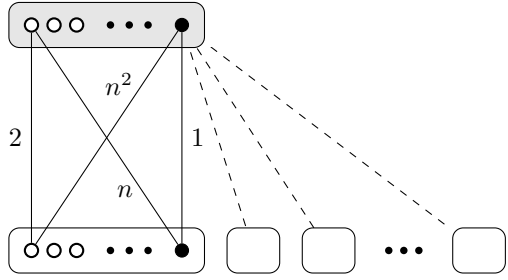
\begin{figure}[t]
  \centering
  \begin{tikzpicture}[node distance=0.5cm,
  dot/.style={fill=black,circle,minimum size=5pt}
  ]





  \draw[rounded corners] (-0.3,-0.3) rectangle (2.3,0.3);
  \node[place,fill=black,circle,minimum size=5pt]       (p1opt)     at (2.0,.0) {};
  \node[place,fill=white,circle,minimum size=5pt,thick] (p1nonopt3) at (0.0,0.0) {};
  \node[place,fill=white,circle,minimum size=5pt,thick]             at (0.3,0.0) {};
  \node[place,fill=white,circle,minimum size=5pt,thick]             at (0.6,0.0) {};

  \draw[rounded corners] (5.6,-0.3) rectangle node (p4) {} (6.3,0.3);  

  \fill[black] (5.15,0.0) circle (0.3ex);
  \fill[black] (4.95,0.0) circle (0.3ex);
  \fill[black] (4.75,0.0) circle (0.3ex);

  \draw[rounded corners] (3.6,-0.3) rectangle node (p3) {} (4.3,0.3);  
  \draw[rounded corners] (2.6,-0.3) rectangle node (p2) {} (3.3,0.3);



  \draw[fill=black!10,rounded corners] (-0.3,2.7) rectangle (2.3,3.3);
  \node[place,fill=black,circle,minimum size=5pt]       (copt)     at (2.0,3.0) {};
  \node[place,fill=white,circle,minimum size=5pt,thick] (cnonopt3) at (0.0,3.0) {};
  \node[place,fill=white,circle,minimum size=5pt,thick]             at (0.3,3.0) {};
  \node[place,fill=white,circle,minimum size=5pt,thick]             at (0.6,3.0) {};

  \fill[black] (1.55,0.0) circle (0.3ex);
  \fill[black] (1.30,0.0) circle (0.3ex);
  \fill[black] (1.05,0.0) circle (0.3ex);

  \fill[black] (1.55,3.0) circle (0.3ex);
  \fill[black] (1.30,3.0) circle (0.3ex);
  \fill[black] (1.05,3.0) circle (0.3ex);
  
  \path[draw] (p1opt)     -- node[right] {$1$} (copt);
  \path[draw] (p1opt)     -- node[near start, left] {$n$} (cnonopt3);
  \path[draw] (p1nonopt3) -- node[near end, left] {$n^2$} (copt);
  \path[draw] (p1nonopt3) -- node[left] {$2$} (cnonopt3);

  \begin{scope}
    \clip (-0.3,0.3) rectangle (6.3, 2.7); 
    \draw[dashed] (p2) -- (copt);
    \draw[dashed] (p3) -- (copt);
    \draw[dashed] (p4) -- (copt);
  \end{scope}

\end{tikzpicture}

  \caption{Hard instance $G_{S}$ for \emph{Spanning Node Representation}.}
  \label{fig:spanned-instance}
  \label{fig:spanned}
\end{figure}

\begin{theorem}\label{thm:cl-lower}
  Starting with an initial solution chosen uniformly at random, the
  expected optimization time of the cluster based (1+1)~EA on $G_{S}$
  is $\Omega(n^m)$.

  Furthermore, for any constant $\epsilon>0$, the probability of
  having obtained an optimal solution after at most
  $n^{(1-\epsilon)m}$ iterations is $e^{-\Omega(m)}$.
\end{theorem}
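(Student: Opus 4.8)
The plan is to exhibit a plateau of ``trap'' solutions which the cluster based \oneplusone reaches with probability $1-e^{-\Omega(m)}$ and which can be left only by a single mutation that relocates the spanned node of \emph{every} one of the $m$ clusters to its unique optimal node. Fix a constant $\delta$ with $0<2\delta<\epsilon$. First I would read off the fitness landscape of $G_S$: since $G_S$ is bipartite between the central and the peripheral nodes, the subgraph induced by the $m$ spanned nodes is a star centred at the chosen central node, so the lower level returns that star and the cost of a solution depends only on whether the spanned central node is optimal and on the number $k\in\{0,\dots,m-1\}$ of peripheral clusters spanning their optimal node; it equals $k+(m-1-k)n^2$ if the central node is optimal and $kn+2(m-1-k)$ if it is suboptimal, and the unique optimum ($k=m-1$, central optimal) has cost $m-1$. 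The decisive fact is the gap created by the weight $n^2$: every solution with an optimal central node and $k<m-1$ has cost at least $n^2$, whereas every solution with a suboptimal central node has cost at most $(m-1)n<n^2$ (recall $n=m^2$). Hence for accepted steps: (i) once the current solution has a suboptimal central node, so do all later ones until the optimum is found, since turning the central node optimal would push the cost to at least $n^2$ and be rejected; and (ii) while the central node is suboptimal the cost is strictly increasing in $k$, so $k$ never increases. Calling a solution \emph{saturated} if all its $m$ spanned nodes are suboptimal, (i) and (ii) imply that the saturated solutions form a plateau of cost $2(m-1)$ that is left only by jumping directly to the optimum, and that jump requires every cluster to be mutated (probability $1/m$) and to draw its optimal node (probability $1/|V_i|=1/m$), an event of probability exactly $m^{-2m}=n^{-m}$.

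Next I would show that with probability $1-e^{-\Omega(m)}$ the algorithm reaches a saturated solution before reaching the optimum. The initial value of $k$ is $\bin(m-1,1/m)$-distributed, so a Chernoff bound gives $k\le\delta m$ except with probability $e^{-\Omega(m)}$, and the central node is initially optimal with probability $1/m$. If the central node starts suboptimal, then by (ii) $k$ only decreases from its initial value $\le\delta m$, so throughout the run the number of suboptimal clusters stays $\ge(1-\delta)m$; moreover each step with $k\ge1$ relocates an optimal peripheral cluster to a suboptimal node (an accepted step) with probability $\Omega(k/m)$, so a tail bound on the resulting sum of geometric waiting times shows a saturated solution is reached within $O(m^2)$ steps with probability $1-e^{-\Omega(m)}$; and since from a solution with $d$ suboptimal clusters the one-step probability of landing on the optimum is at most $m^{-2d}$, the probability that the optimum is found during those $O(m^2)$ steps is at most $O(m^2)\cdot m^{-2(1-\delta)m}=e^{-\Omega(m)}$.

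The main case, of probability $1/m$, is that the central node starts optimal; here it is not enough to reach a saturated solution first with constant probability, we need conditional failure probability $e^{-\Omega(m)}$. The cost, which is $\ge n^2$ until $k=m-1$, is reduced either by driving $k$ to $m-1$ (reaching the optimum) or, at any step, by mutating the central node to a suboptimal one, which is always accepted and occurs with probability $q=\tfrac1m\cdot\tfrac{m-1}{m}=\Theta(1/m)$ per step; let $\tau$ be the first such step, stochastically dominated by a geometric variable of parameter $q$ and independent of the peripheral mutations. While the central node stays optimal, $k$ can only increase, and it grows by at most a $\bin(m-1,m^{-2})$ amount per step (mean $\le 1/m$), namely when some suboptimal peripheral cluster is set to its optimal node. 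The crucial estimate is that the compound sum of these increments over the random horizon $\tau$ is concentrated: its mean is $\Theta(1/m)\cdot\Theta(m)=\Theta(1)$, and bounding its probability generating function at a point bounded away from $1$ (which is admissible precisely because $mq=\Theta(1)$) yields $\Pr[\text{total increase}\ge\delta m]\le e^{-\Omega(m)}$, whereas a plain Markov bound would give only $O(1/m)$ here --- this is the technical heart of the proof. Consequently, with probability $1-e^{-\Omega(m)}$ the optimum is not reached before $\tau$, and at step $\tau$ we enter the suboptimal-central regime with $k\le \delta m+\delta m=2\delta m$, after which the argument of the previous paragraph applies with $\delta$ replaced by $2\delta$ (and $1-2\delta>1-\epsilon>0$).

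Finally I would combine the pieces. Let $E$ be the event that a saturated solution is reached before the optimum; the above gives $\Pr[E]\ge 1-e^{-\Omega(m)}$. For the tail bound, on $E$ the run stays on the saturated plateau until one step of probability $n^{-m}$ hits the optimum, so with $T=n^{(1-\epsilon)m}$ we obtain $\Pr[\text{optimum within }T\text{ steps}]\le\Pr[\overline E]+T\cdot n^{-m}=e^{-\Omega(m)}+n^{-\epsilon m}=e^{-\Omega(m)}$. For the expectation, conditioning on reaching the first saturated solution (probability $\ge1/2$ for large $m$), the strong Markov property makes the number of subsequent steps until the optimum exactly geometric with parameter $n^{-m}$, whence $\expect{\text{optimization time}}\ge\tfrac12 n^{m}=\Omega(n^m)$. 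The step I expect to be the main obstacle is the central-optimal initialisation, and inside it the passage from a polynomially small to an exponentially small failure probability via the concentration of the compound-geometric-sum for $k$.
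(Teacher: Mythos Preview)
Your proposal is correct and complete, but it takes a genuinely different route from the paper at the key technical step.

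Both arguments have the same skeleton: show that with probability $1-e^{-\Omega(m)}$ the process reaches the plateau of all-suboptimal (``saturated'') solutions before it reaches the optimum, and then observe that escaping the plateau requires a single mutation of probability exactly $n^{-m}$. The difference lies in how the two proofs handle the danger that $k$ (the number of optimal peripheral clusters) runs up to $m-1$ while the central cluster is still optimal.

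You condition on the initial state of the central cluster and, in the delicate case where it starts optimal, you control $k$ directly: you let $\tau\sim\mathrm{Geom}(\Theta(1/m))$ be the first step at which the central node mutates to a suboptimal one, upper-bound the growth of $k$ by a compound sum $\sum_{t\le\tau}Y_t$ with $Y_t\sim\bin(m-1,m^{-2})$ independent of $\tau$, and use an MGF argument (valid because $mq=\Theta(1)$ keeps $E[e^{\lambda Y_1}]$ close enough to $1$) to get $\Pr[\text{sum}\ge\delta m]=e^{-\Omega(m)}$. This is correct, and you rightly identify it as the technical heart of your argument.

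The paper avoids this computation entirely by a decoupling trick. It fixes a single phase of length $n-1$ and, for each peripheral cluster $i$, considers the \emph{mutation-only} event ``$i$ is initially suboptimal and is never mutated to its optimal node during the phase''. These events are independent across clusters and independent of selection, each has probability $\ge(1-1/m)(1-1/n)^{n-1}=\Omega(1)$, and a Chernoff bound gives at least $m/6$ such clusters with probability $1-e^{-\Omega(m)}$. Since a cluster satisfying this event is suboptimal in every \emph{accepted} solution throughout the phase, one gets $k\le m-1-m/6$ uniformly over the phase without ever analysing the dynamics of $k$. A second, very long phase of length $n^{m/12}$ then handles convergence to the saturated plateau via Markov's inequality and a union bound.

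In short: your approach is more fitness-landscape-aware and tracks the actual trajectory of $k$; the paper's approach sidesteps the trajectory entirely by bounding a mutation-only surrogate event, trading the compound-sum concentration you need for a coarser but simpler phase analysis. Both yield the stated $e^{-\Omega(m)}$ tail and the $\Omega(n^m)$ expectation.
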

\begin{proof}
  We define two phases for the run of the \oneplusone. The first phase
  consists of the first $n-1$ iterations while the second phase starts
  at the end of the first phase and continues for $n^{m/12}$
  iterations.  Four distinct events are considered failures during the
  run of the \oneplusone for the instance described above.
  \begin{enumerate}
  \item The first failure occurs if during the first phase of the run,
    the algorithm obtains a search point with less than $m/6$
    sub-optimal peripheral clusters.
  \item The second type of failure occurs when the central cluster
    fails to switch to a suboptimal node at least once during the
    first phase.
  \item The third type of \emph{failure} occurs when the algorithm
    does not switch all the optimal peripheral clusters to suboptimal
    clusters during the second phase.
  \item The fourth failure corresponds to a direct jump to the optimal
    solution during the second phase.
\end{enumerate}	
We first show that the probability of the first failure event is at
most $\exp(-\frac{m}{12})$. This implies that with overwhelmingly high
probability, a constant fraction of peripheral clusters is always
suboptimal during the first $n-1$ iterations.
For $i\in[m-1]$ and $t\geq 0$, let $Z_i(t), $ be a random variable
such that $Z_i(t)=1$ if cluster $V_i$ is always sub-optimal in
iteration 0 through iteration $t$, and $Z_i(t)=0$ otherwise. The
probability that a suboptimal node is selected in the initial solution
is $1-m/n$.  In the following iterations, the probability that a
cluster is selected for mutation and that its new spanned node is
optimal is $(1/m)(m/n)=1/n$. So it is clear that
$$\prob{Z_i(t) = 1} \geq (1-m/n)(1-1/n)^t.$$  
By linearity of expectation, $$\expect{\sum_{i=1}^{m-1}Z_i(t)} \geq
(m-1) \left(1-\frac{m}{n} \right)\left(1-\frac{1}{n}\right)^t.$$
Considering a phase length of $t=n-1$, and assuming that $m$ is
sufficiently large and $n=m^2$ holds, we
get $$\expect{\sum_{i=1}^{m-1}Z_i(t)}\geq \frac{m}{3}.$$ Finally, a
Chernoff bound \citep{motwani:randomized} implies that
\begin{align*}
  \prob{\sum_{i=1}^{m-1}Z_i(t)
    \leq \left(1-\frac{1}{2}\right)\frac{m}{3}} 
    \leq \exp\left(-m/12\right).
\end{align*}
  
We then show that the probability of the second failure event is
$\exp(-\Omega(\sqrt{n}))$.  In each iteration the probability to
switch the central cluster to a suboptimal node is at least
  $$p=\frac{1}{m} \left(1-\frac{m}{n} \right)=\Omega \left(\frac{1}{\sqrt{n}} \right).$$ The
probability that this event does not occur in $n-1$ steps is
\begin{align*}
    (1-p)^{n-1} & = \left((1-p)^{1/p}\right)^{(n-1)p}\\
               & \leq \exp(-p(n-1)) = \exp(-\Omega(\sqrt{n})).
\end{align*}

Now, we show that the probability of the third failure event is less
than $n^{-m/12}$, assuming that the first two failure events do not
occur.  As long as the central cluster remains suboptimal, switching a
suboptimal node in a peripheral cluster to an optimal node will result
in an extra cost of $n-2$.  Conversely, switching an optimal
peripheral cluster into a sub-optimal cluster will decrease the cost
by $n-2$. As long as there is at least one suboptimal peripheral
cluster, making the central cluster optimal will incur an extra cost
of at least $n^2-2$. So, during phase two, the algorithm can not make
any suboptimal cluster optimal unless all suboptimal clusters are made
optimal in the same iteration. The probability of making at least
$m/6$ suboptimal peripheral clusters optimal simultaneously is at most
$$\left(\frac{1}{m} \cdot \frac{m}{n} \right)^{m/6} = \left(\frac{1}{n} \right)^{m/6}.$$

Since the probability to jump to the optimal solution is at most
$n^{-m/6}$ in each iteration, it holds by the union bound that the
probability of failure event three is at
most $$n^{-m/6}n^{m/12}=n^{-m/12}.$$

Finally, we show that the probability of failure event four is
$O(n^{-m/13})$.  The probability that an optimal peripheral cluster is
made suboptimal by the \oneplusone is at least
$$\frac{1}{m} \cdot
\frac{n-m}{n} \cdot \left(1- \frac{1}{m}\right)^{m-1} \geq \frac{1}{3m}.
$$ 
The expected time $E[T^{sub}]$ until all peripheral clusters have
become suboptimal is therefore at most $m \cdot 3m =
O(m^2)$. Considering a phase of length $n^{m/12}$ and taking into
account $m^2 =n$, it holds by Markov's inequality that the probability
of a type four failure is
\begin{align*}
  \prob{T^{sub}> n^{m/12} } \leq O(m^2)n^{-m/12} =O({n^{-m/12 + 4}}).
\end{align*}

By union bound, the probability that any type of failure occurs is
less than the sum of their independent probabilities, which is
$e^{-\Omega(m)}$. Hence, with overwhelmingly high probability, after
the second phase, the algorithm has obtained a locally optimal
solution where all peripheral clusters are sub-optimal.  After that
iteration, the probability to jump directly to the optimal solution is
$n^{-m}$, and the expected time for this event to occur is $n^{m}$.

Let $E$ be the event that no failure occurs. Then, the first statement
of the theorem follows by the law of the total probability,
\begin{align*}
\expect{T}&\geq \expect{T|E}\prob{E}\\
          & =   \Omega(n^m) (1- e^{-\Omega(m)}) \\
          & =   \Omega(n^m).
\end{align*}

Furthermore, by union bound, it holds that
\begin{align*}
  \prob{T< n^{(1-\epsilon)m}\mid E} \leq n^{(1-\epsilon)m}n^{-m}=n^{-\epsilon m}.
\end{align*}
Hence, the second statement of the theorem follows by the law of
total probability
\begin{align*}
\prob{T< n^{(1-\epsilon)m}} 
   & = \prob{T< n^{(1-\epsilon)m}\mid E}\prob{E}  \\
   & \quad + \prob{T< n^{(1-\epsilon)m}\mid \overline{E}}\prob{\overline{ E}} \\
   & \leq \prob{T< n^{(1-\epsilon)m}\mid E} + \prob{ \overline{E}}\\
   & \leq n^{-\epsilon m} + e^{-\Omega(m)} = e^{-\Omega(m)}.
\end{align*}
\end{proof}

Our results for the \emph{Spanned Nodes Representation} show that the
cluster based \oneplusone obtains an optimal solution in time $O(n^m)$
and our analysis for the hard instance $G_{S}$ shows that this bound
is tight.

\subsection{Global Structure Representation}
\label{sec:global}

The second approach examined in \citep{HuR12} uses the \emph{Global
  Structure Representation}.  It works on the complete graph 
$H = (V', E')$ obtained from the input graph $G=(V,E, c)$.  The node 
$v_i \in V'$, $1 \leq i \leq m$, represents the cluster $V_i$ of $G$.

The upper level solution in the \emph{Global Structure Representation}
is a spanning tree $T$ of $H$ and the lower level solution is a set of
nodes $P=(p_1,\ldots, p_m)$ with $p_i \in V_i$ that minimises the cost
of a spanning tree which connects the clusters in the same way as
$T$. Given a spanning tree $T$ of $H$, the set of nodes $P$ can be
computed in time $O(n^{2})$ using dynamic programming~\citep{Pop04}.

We consider the tree based \oneplusone outlined in
Algorithm~\ref{alg:tree}. It starts with a spanning tree $T$ of $H$
that is chosen uniformly at random. In each iteration, a new solution
$T'$ of the upper layer is obtained by performing $K$ edge-swaps to
$T$. Here the parameter $K$ is chosen according to the Poisson
distribution with expectation $1$. In one edge swap, an edge $e$
currently not present in the solution is introduced and an edge from
the resulting cycle is removed such that a new spanning tree of $H$ is
obtained.  After having produced the offspring $T'$, the corresponding
set of nodes $P'$ is computed using dynamic programming. $P$ and $T$
are replaced by $P'$ and $T'$ if the cost of the new solution is not
worse than the cost of the old one.

\begin{algorithm}[t]
  \caption{Tree based \oneplusone}
  \begin{algorithmic}
    \STATE Choose a spanning tree $T$ of $H$.
    \STATE Apply dynamic programming to find the minimum spanned
    nodes $P=(p_1,\ldots, p_m)$ induced by $T$.
    \WHILE{termination condition not satisfied}
    \STATE {$T'\leftarrow T$ }
    \FOR{$i \in \left[ K \right]$ where $K\sim \Pois(1)$}
    \STATE{ Sample edge $e\sim \unif\left(E' \setminus T\right)$ }
    \STATE{ Sample edge $e'\sim \unif$(edges in cycle in $T'\cup\left\{ e \right\}$) }
    \STATE{ $T'\leftarrow T'\cup\{e\}\setminus \left\{ e'\right\}$ }
    \ENDFOR
    \STATE{Apply dynamic programming to find a set of spanned nodes $P'=(p'_1,\ldots,p'_m)$  with respect to $T'$ of minimal cost.}
    \IF{$\sum_{(i,j)\in T'}c(p'_i,p'_j)\leq \sum_{(i,j)\in T}c(p_i,p_j)$}
    \STATE {$P\leftarrow P'$}
    \STATE {$T\leftarrow T'$}
    \ENDIF
    \ENDWHILE
  \end{algorithmic}
\label{alg:tree}
\end{algorithm}

In the following, we show that the tree based \oneplusone is a
fixed-parameter evolutionary algorithm for the GMSTP problem when
considering the number of clusters $m$ as the parameter. We do this by
transferring the result of~\citep{Pop04} to the tree based \oneplusone.

\begin{theorem}
  The expected time of the tree based \oneplusone to find the optimal
  solution for any instance of the GMSTP problem is
  $O(m^{3(m-1)})$. Furthermore, for any $k\geq 1$, the
  probability that an optimal solution is not found within $ekm^{3(m-1)}$
  steps is less than $\exp(-k)$.
\end{theorem}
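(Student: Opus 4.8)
The plan is to exploit that the lower level is solved to optimality. For every upper-level tree $T$ of $H$, Pop's dynamic program returns a node assignment minimising the cost over all generalised spanning trees whose cluster-connection pattern is $T$; hence the fitness the \oneplusone assigns to an upper-level tree $T$ is exactly $\phi(T)$, the cost of a cheapest generalised spanning tree with pattern $T$, and the GMSTP optimum equals $\phi(T^*)$ for at least one spanning tree $T^*$ of the $m$-vertex graph $H$. Once the current upper-level tree has fitness $\phi(T^*)$ the solution is globally optimal and is never replaced, so it suffices to bound the expected number of mutations needed, from an arbitrary starting tree, until the current tree becomes some optimal structure $T^*$.

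The engine of the bound is that a single mutation can go all the way from the current tree $T$ to $T^*$: the number $K$ of edge-swaps in a mutation is $\Pois(1)$, so $K$ equals the swap distance $d := |T\setminus T^*|\le m-1$ with probability $e^{-1}/d!$. First I would prove a purely combinatorial lemma: there are at least $d!$ distinct length-$d$ swap sequences from $T$ to $T^*$. This is by induction on $d$ --- for \emph{any} edge $f\in T^*\setminus T$, inserting $f$ into $T$ creates a cycle which must contain some edge $e\in T\setminus T^*$ (else $T^*$ would contain that cycle), and the swap $T\mapsto T-e+f$ yields a spanning tree at distance $d-1$ from $T^*$; this gives at least $d$ different legal first swaps, each extendable, by the induction hypothesis, in at least $(d-1)!$ ways, and sequences with distinct first swaps are distinct.

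Next I would estimate the probability that one mutation realises such a sequence. Conditioning on $K=d$ (probability $e^{-1}/d!$), at the $i$-th swap the prescribed non-tree edge is chosen with probability at least $1/\binom{m}{2}$ --- it lies in the fixed set $E'\setminus T$ from which the algorithm samples, since every edge of $T^*\setminus T$ does --- and the prescribed edge of the created cycle is chosen with probability at least $1/m$, because a cycle in an $m$-vertex tree plus one edge has at most $m$ edges. Hence each fixed sequence is realised with probability at least $(\binom m2\cdot m)^{-d}\ge m^{-3d}$ given $K=d$, and summing over the at least $d!$ sequences cancels the $1/d!$: a single mutation turns $T$ into $T^*$ with probability at least $e^{-1}m^{-3d}\ge e^{-1}m^{-3(m-1)}$, uniformly in $T$. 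A geometric-time argument then gives expected optimisation time at most $e\,m^{3(m-1)}=O(m^{3(m-1)})$, and since the per-mutation success events are independent, $\prob{\text{no optimum after } t \text{ steps}}\le(1-e^{-1}m^{-3(m-1)})^t\le\exp(-t\,e^{-1}m^{-3(m-1)})$; taking $t = e\,k\,m^{3(m-1)}$ yields the stated $\exp(-k)$ bound.

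The main obstacle I anticipate is the combinatorial lemma and the accompanying bookkeeping: checking that the described mutation operator genuinely realises every one of the $d!$ sequences (inserted edges always come from the fixed parent-dependent set $E'\setminus T$, and no relevant cycle exceeds $m$ edges), and that counting the $d!$ orderings is exactly what is needed to avoid an extraneous $(m-1)!$ factor that a single-path argument would incur. The reduction in the first paragraph is the conceptual point but, given Pop's result, is essentially routine.
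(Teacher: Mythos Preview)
Your proposal is correct and follows essentially the same approach as the paper: lower-bound the probability that a single mutation transforms the current tree $T$ into an optimal structure $T^*$ by counting the ways to insert the missing edges of $T^*$ one by one, so that the resulting factorial cancels the $1/d!$ from $\prob{\Pois(1)=d}$, yielding a per-step success probability of at least $e^{-1}m^{-3(m-1)}$ and hence the stated expectation and tail bound. Your version is in fact somewhat cleaner than the paper's, since you work with the true swap distance $d=|T\setminus T^*|$ and isolate the $d!$ counting as an explicit matroid-exchange lemma, whereas the paper phrases the same cancellation implicitly via the factor $i$ for ``one of the $i$ missing optimal edges'' and writes the argument only for the worst case $d=m-1$.
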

\begin{proof}
  An upper layer solution is a tree $T$ of $H$.  Let $T^*$ be any tree
  of $H$ for which there exists a set $P^*$ of spanning nodes such
  that $T^*$ and $P^*$ form an optimal solution. For any non-optimal
  solution $T$, define $w((T)$ as the number of edges in $T^*$ which
  are missing in $T$.

  The mutation operator can convert a non-optimal solution $T$ into
  the optimal solution $T^*$ with a sequence of $w((T)\leq m-1$ edge
  exchange operations. The probability that the mutation operator
  exchanges $w((t)\leq m-1$ edges in one mutation step is at least
  $$\prob{\Pois(1)=m-1}=1/e(m-1)!.$$

  In each exchange operation, if there are $i$ optimal edges missing,
  then the probability that one of the missing optimal edges is
  inserted is at least $i/m^2$. After the addition of an optimal edge,
  the probability of excluding a non-optimal edge is at least $1/m$
  since the largest cycle cannot be longer than $m$.  At most $m-1$
  non-optimal edges must be exchanged in this manner.  So the
  probability that the non-optimal solution $T$ will be converted to
  the optimal solution $T^*$ in one mutation step is at least
  \begin{align*}
    \frac{1}{e(m-1)!}\cdot  
      \prod_{i=1}^{m-1} \frac{i}{m^2}\cdot\frac{1}{m} 
      \geq (1/e) m^{-3(m-1)}.
  \end{align*}

  So, the expected time to achieve an optimal solution is in
  $O(m^{3(m-1)})$. Furthermore, the probability that the optimal
  solution has not been created after $ekm^{3(m-1)}$ iterations is
  $$(1-(1/e)m^{-3(m-1)})^{ekm^{3(m-1)}}\leq \exp(-k).$$
\end{proof}

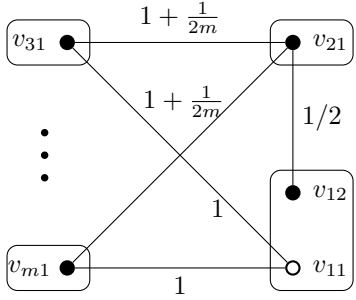
\begin{figure}
  \centering
  \begin{tikzpicture}[node distance=0.5cm,
  dot/.style={fill=black,circle,minimum size=5pt}]







  \node[place,fill=black,circle,minimum size=5pt,thick] (v12) at (3.0,1.0) {};
  \node[place,fill=white,circle,minimum size=5pt,thick] (v11) at (3.0,0.0) {};
  
  \draw[rounded corners] (2.7,-0.3) rectangle (3.8,1.3);
  \draw[rounded corners] (2.7, 2.7) rectangle (3.8,3.3);

  \node[place,fill=black,circle,minimum size=5pt,thick] (v21) at (3.0,3.0) {};

  \node[right of=v11] {$v_{11}$};
  \node[right of=v12] {$v_{12}$};
  \node[right of=v21] {$v_{21}$};


  \draw[rounded corners] (-0.8,-0.3) rectangle (0.3,0.3);
  \draw[rounded corners] (-0.8, 2.7) rectangle (0.3,3.3);

  \fill[black] (-0.3,1.8) circle (0.3ex);
  \fill[black] (-0.3,1.5) circle (0.3ex);
  \fill[black] (-0.3,1.2) circle (0.3ex);


  \node[place,fill=black,circle,minimum size=5pt,thick] (p1) at (0.0,3.0) {};
  \node[place,fill=black,circle,minimum size=5pt,thick] (p2) at (0.0,0.0) {};

  \node[left of=p1] {$v_{31}$};
  \node[left of=p2] {$v_{m1}$};


  \path[draw] (v11) -- node[near start,left] {$1$} (p1);
  \path[draw] (v21) -- node[near start,left] {$1+\frac{1}{2m}$} (p2);

  \path[draw] (v11) -- node[below] {$1$} (p2);
  \path[draw] (v21) -- node[above] {$1+\frac{1}{2m}$} (p1);



  \path[draw] (v12) -- node [right] {$1/2$} (v21);

\end{tikzpicture}

  \caption{Hard instance $G_{G}$ for \emph{Global Structure Representation}.
           Edges not shown have weight $\infty$.\label{fig:global-new}}
\end{figure}

We now present an instance which is hard to be solved by the tree
based \oneplusone.  The instance $G_{G}$, illustrated in
Figure~\ref{fig:global-new}, consists of $n$ nodes and $m$
clusters. There are two central clusters denoted by $V_1$ and
$V_2$. The cluster $V_1$ contains the two nodes $v_{11}$ and
$v_{12}$. The remaining clusters $V_i, 2\leq i\leq m$, contain a
single node $v_{i1}$ each. The edges that connect the nodes $v_{11}$
to the peripheral cluster nodes have cost $1$. The edges that connect
$v_{21}$ to the peripheral clusters have weight $1+1/2m$.  The edge
that connects $v_{12}$ and $v_{21}$ have weight $1/2$. All other edges
have cost $\infty$. Hence, if the tree based \oneplusone connects
cluster $V_1$ and $V_2$, then the dynamic programming algorithm will
choose node $v_{12}$.

In our analysis, we will use the following lemma on basic properties
of the Poisson distribution with expectation $1$.

\begin{lemma}\label{lemma:poisson-tail}
If $K\sim\Pois(1)$, then $\prob{K\geq n}< 2(e/n)^n.$
\end{lemma}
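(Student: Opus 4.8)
The plan is to write the tail probability out explicitly and then dominate the resulting series by a geometric one. Since $K\sim\Pois(1)$ has $\prob{K=k}=e^{-1}/k!$, we have $\prob{K\geq n}=e^{-1}\sum_{k\geq n}1/k!$. The first step is to pull out the factor $1/n!$ and note that for every $j\geq 0$,
\[
\frac{n!}{(n+j)!}=\prod_{i=1}^{j}\frac{1}{n+i}\leq\frac{1}{(n+1)^{j}},
\]
so that
\[
\sum_{k\geq n}\frac{1}{k!}\leq\frac{1}{n!}\sum_{j\geq 0}(n+1)^{-j}=\frac{1}{n!}\cdot\frac{n+1}{n}\leq\frac{2}{n!},
\]
the last inequality holding for all $n\geq 1$ since then $(n+1)/n=1+1/n\leq 2$.

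The second step is an elementary, self-contained Stirling-type bound $n!\geq(n/e)^{n}$, equivalently $1/n!\leq(e/n)^{n}$; this follows at once from $e^{n}=\sum_{k\geq 0}n^{k}/k!\geq n^{n}/n!$. Combining the two steps yields
\[
\prob{K\geq n}=e^{-1}\sum_{k\geq n}\frac{1}{k!}\leq e^{-1}\cdot\frac{2}{n!}\leq\frac{2}{e}\Bigl(\frac{e}{n}\Bigr)^{n}<2\Bigl(\frac{e}{n}\Bigr)^{n},
\]
which is the claimed inequality (in fact with a spare factor of $1/e$).

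There is essentially no obstacle here; the only points that need a little care are making sure the geometric-series bound is applied only in the valid range — it is, for all $n\geq 1$ — and quoting a form of Stirling's inequality with no hidden asymptotic error term, for which the bound $n!\geq(n/e)^{n}$ derived above suffices.
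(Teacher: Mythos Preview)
Your proof is correct and follows essentially the same route as the paper's: both pull out $1/n!$, dominate the tail by a geometric series with ratio $1/(n+1)$, and then apply the Stirling-type bound $n!\geq (n/e)^n$. The only cosmetic difference is that the paper invokes Stirling's formula for the last step, whereas you derive $n!\geq (n/e)^n$ directly from the exponential series, and you keep the factor $e^{-1}$ to the end rather than dropping it immediately.
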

\begin{proof}
Using Stirling's approximation of the factorial,
\begin{align*}
  n! > \sqrt{2\pi n} (n/e)^n > (n/e)^n.
\end{align*}
we obtain the simple bound
\begin{align*}
  \prob{K\geq n} 
  & = \sum_{i=n}^\infty \frac{1}{ei!}\\
  & < \sum_{i=n}^\infty \frac{1}{i!}\\
  & < \sum_{i=n}^\infty \frac{1}{n!}\left(\frac{1}{n+1}\right)^{i-n}\\
  & < (e/n)^n \sum_{i=0}^\infty \left(\frac{1}{n+1}\right)^{i}\\
  & = (e/n)^n \left(1+\frac{1}{n}\right).
\end{align*}
\end{proof}

Using the previous lemma, we are able to show that the tree based
\oneplusone finds it hard to optimize $G_{G}$ when choosing spanning
tree uniformly at random among all spanning trees having weight less
then $\infty$.

\begin{theorem}
  Starting with a spanning tree chosen uniformly at random among all
  spanning trees that have cost less than $\infty$, the expected optimization
  time of the tree based \oneplusone on $G_{G}$ is
  $\Omega((m/e)^{m-1})$.
\end{theorem}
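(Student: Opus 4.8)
The plan is to exhibit a large family of \emph{trap} spanning trees of $H$ that the tree based \oneplusone enters with probability $1-e^{-\Omega(m)}$ and from which every improving step requires a mutation of at least $m-2$ edge swaps, and then to bound the escape probability with Lemma~\ref{lemma:poisson-tail}. The first ingredient is a classification of the finite-cost spanning trees of $H$. Since all edges inside $\{V_3,\dots,V_m\}$ cost $\infty$, any finite tree links the peripheral clusters only through $v_1$ and $v_2$; moreover $v_{12}$ can only be used on the edge $(v_1,v_2)$, while $v_1$ must be represented by $v_{11}$ whenever it is adjacent to a peripheral cluster. A short case distinction then yields exactly two kinds of finite trees: the unique \emph{optimal} tree $T^*$ formed by $(v_1,v_2)$ and the $m-2$ edges $(v_2,v_j)$, of cost $m-1-\tfrac{1}{m}$; and the \emph{suboptimal} trees, in which $v_1$ and $v_2$ are each joined to a non-empty set of peripheral clusters sharing exactly one cluster, the \emph{bridge}. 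If $b$ denotes the number of peripheral clusters joined to $v_2$ in such a tree, its dynamic-programming cost equals $m-1+\tfrac{b}{2m}$, strictly increasing in $b$, and it shares exactly $b$ edges with $T^*$, so converting it into $T^*$ needs $m-1-b$ edge swaps. Because the cost is monotone in $b$, the algorithm never accepts a step that increases $b$, so $b$ is non-increasing along the run, and the traps will be the $b=1$ trees.

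Next I would analyse the run until a $b=1$ tree is reached. A uniformly random finite tree is suboptimal with probability $1-o(1)$, and conditioned on this its parameter is $b\sim 1+\bin(m-3,\tfrac{1}{2})$, so $m/4\le b\le 3m/4$ with probability $1-e^{-\Omega(m)}$; condition on this. While $b\le 3m/4$, any step producing $T^*$ needs a mutation with at least $m-1-b\ge m/4-1$ swaps, which by Lemma~\ref{lemma:poisson-tail} happens with probability $e^{-\Omega(m\log m)}$. At the same time, whenever $b\ge 2$ there is a one-swap move---insert the $v_1$-edge to some peripheral cluster currently attached only to $v_2$, then remove the $v_2$-edge from the resulting $4$-cycle---that strictly decreases $b$ and is therefore accepted; it is performed with probability $\Omega(1/m^2)$ in a step. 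A concentration bound for a sum of at most $3m/4$ geometric waiting times shows that within $O(m^3)$ steps the algorithm reaches a $b=1$ tree with probability $1-e^{-\Omega(m)}$, and a union bound over these $O(m^3)$ steps against the per-step escape probability $e^{-\Omega(m\log m)}$ shows it does so without ever visiting $T^*$.

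Finally, from a $b=1$ tree the algorithm can only relabel the bridge or jump directly to $T^*$. Since the edge $(v_1,v_2)$ belongs to $T^*$ but to no $b=1$ tree, a successful mutation must sample $(v_1,v_2)$ in one of its $K\ge m-2$ swaps; as each swap draws its new edge uniformly from a set of $\binom{m}{2}-(m-1)=\tfrac{(m-1)(m-2)}{2}$ edges, the probability of sampling $(v_1,v_2)$ within $k$ swaps is at most $\tfrac{2k}{(m-1)(m-2)}$. Combining this, $\prob{K=k}=\tfrac{1}{ek!}$, and Lemma~\ref{lemma:poisson-tail} yields a per-step escape probability of $O\bigl((e/(m-1))^{m-1}\bigr)$, so the expected time spent in the trap, and a fortiori the expected optimisation time, is $\Omega\bigl((m/e)^{m-1}\bigr)$; the law of total probability, combined with the high-probability events of the previous paragraph, gives the claimed bound.

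I expect the delicate point to be the second step, where two opposing estimates have to be balanced: the per-step probability of jumping to $T^*$ must stay super-exponentially small, which relies on $b$ never exceeding $3m/4$ (hence on the concentration of the initial value of $b$), while the descent to $b=1$ must provably finish after only polynomially many steps with overwhelming probability, so that the union bound over those steps still beats the jump probability. The remaining ingredients---the classification of the finite trees of $H$, the elementary spanning-tree fact that the number of required swaps equals the number of differing edges, and the Poisson tail estimates of Lemma~\ref{lemma:poisson-tail}---are routine.
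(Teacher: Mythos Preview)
Your proposal is correct and follows essentially the same architecture as the paper's proof: rule out the optimal tree initially, show concentration of the number of peripherals attached to each central cluster, argue that the process drifts to a local optimum before it can jump to $T^*$, and then lower-bound the escape time from that local optimum via the Poisson tail lemma. Two points of comparison are worth noting. First, you run a polynomial phase of length $O(m^3)$ and appeal to a concentration bound on a sum of geometric waiting times, whereas the paper takes a superpolynomial phase of length $((m-2)/3e)^{(m-2)/6}$ and gets away with plain Markov; both work, but your version needs the extra (standard) Chernoff-for-geometrics step to deliver the $1-e^{-\Omega(m)}$ probability. Second, your escape analysis is actually sharper than the paper's: you correctly observe that a $b=1$ tree shares one edge with $T^*$ and hence requires only $m-2$ swaps, and you recover the missing factor of $m$ by additionally bounding the probability that the specific edge $(v_1,v_2)$ is sampled during the mutation. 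The paper instead asserts that ``at least $m-1$ edge exchanges'' are needed from the local optimum, which overcounts by one, so your refinement is the cleaner route to the stated $\Omega((m/e)^{m-1})$ bound.
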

\begin{proof} 
  Consider the instance in Figure \ref{fig:global-new}.  In the
  following, edge $e:=\{v_{12},v_{21}\}$ is the edge which connects
  the two central clusters.  The optimal solution corresponds
  to the spanning tree which includes edge $e$, and where all all
  other clusters are connected to cluster $V_2$. The solution where
  all peripheral clusters are connected to $V_1$, and where cluster
  $V_2$ is connected to one of the peripheral clusters, is a local
  optimum.
  
  We define four failure events that can occur during a run of the
  \oneplusone on this instance.  
  \begin{enumerate}
  \item The first type of failure occurs when the
    initial solution includes edge $e$. 
  \item The second type of failure occurs when
    less than $m/3$ of the peripheral clusters are connected to cluster
    $V_1$ in the initial solution.  
  \item The third type of failure occurs when the
    algorithm jumps directly to the optimal solution during the first
    $((m-2)/3e)^{(m-2)/6}$ iterations.
  \item  Finally, the fourth type of failure occurs if after
    iteration $((m-2)/3e)^{(m-2)/6}$, there exists a peripheral cluster which is not
    connected to cluster $V_1$.
  \end{enumerate}

  There are $m-2$ peripheral clusters which must be connected to
  either $V_1$ or $V_2$. Additionally, cluster $V_1$ and $V_2$ must be
  connected. This connection can be established either by adding edge
  $e=(v_{12}, v_{21})$, or by connecting a peripheral cluster to both
  $V_1$ and $V_2$. There are $2^{m-2}$ spanning trees which contain
  edge $e$, and $(m-2)\cdot 2^{m-3}$ spanning trees which do not
  contain edge $e$ since one of the $m-2$ peripheral clusters will be
  connected to both central clusters and the others will be connected
  to only one.
  So, the probability that a uniformly chosen spanning tree includes
  edge $e$ is $O(1/m)$, which is the probability of the first type of
  failure.

  Now, we show that the probability of the second type of failure is
  at most $\exp(-\Omega(m))$. Considering that the probability of a
  specific cluster is adjacent to $V_1$ in the initial solution is
  larger than $1/2$, the probability that less than $(m-2)/3$ clusters
  are connected to cluster $V_1$ in the initial solution is bounded by
  $\exp(-\Omega(m))$ using a Chernoff bound.
 
  Assuming that type one and type two failures did not occur, the
  algorithm cannot accept new search points where a cluster which is
  originally connected to $V_1$ is instead connected to $V_2$ since it
  will create an extra cost of $1/2m$. The only exception is if a type
  three failure occurs, \ie the algorithm jumps directly to the
  optimal solution where all the peripheral clusters are connected to
  $V_2$. For a type three failure to occur, at least $(m-2)/3$
  clusters have to be modified simultaneously.  Therefore, using
  Lemma~\ref{lemma:poisson-tail}, the probability of jumping directly
  to the optimal solution in a single step is bounded from above by
  $$2(3e/(m-2))^{(m-2)/3}.$$

  Taking a phase length of $((m-2)/3e)^{(m-2)/6}$ into account, the
  probability of a type three failure can be bounded from above using
  the union bound, as
  \begin{align*}
    ((m-2)/3e)^{(m-2)/6} 2(3e/(m-2))^{(m-2)/3} = (m/e)^{-\Omega(m)}.
  \end{align*}

  Now, it will be shown that the probability of a type four failure is
  $e^{-\Omega(m)}$. The probability that a single peripheral cluster
  which is connected to $V_2$ is switched to $V_1$ is bounded from
  below by
  \begin{align*}
    &\frac{1}{3} \cdot  \frac{1}{e} \cdot \frac{1}{(m^2-(m-1))}  =\Omega(1/m^{2}).
  \end{align*}

  Thus, the expected time between any such event is $O(m^2)$, and the
  expected time until all of the at most $m-2$ peripheral clusters are
  connected to $V_1$ is $\expect{T'}=O(m^3)$. By Markov's inequality,
  it holds for any nonnegative random variable $X$ that 
  $$\prob{X\geq k } \leq \frac{\expect{X}}{k}. $$ The probability that 
  it takes longer than $$k=((m-2)/3e)^{(m-2)/6}$$ iterations is
  therefore no more than $$\frac{\expect{T'}}{k}=O(m^3) \cdot
  ((m-2)/3e)^{-(m-2)/6} = (m/e)^{-\Omega(m)}.$$
 
  This proves our claim about the probability of failure event four.
  
  If none of the above mentioned failures occur, we reach the local
  optimum where all the peripheral clusters are connected to cluster
  $V_1$. From this point on, the probability to jump to the optimal
  solution is by Lemma \ref{lemma:poisson-tail} no more than
  $$
    2(e/(m-1))^{m-1}
  $$ 
  because it is necessary to make at least $m-1$ edge exchanges to
  reach the optimum. The expected time to reach the optimal solution
  conditional on no failure is therefore more than $(1/2)(m/e)^{m-1}$.

  Let $R$ be the event that no failure occurs. By the law of total
  probability, it follows that the expected time $\expect{T}$ to reach
  the global optimum is
  \begin{align*}
    \expect{T}&\geq \expect{T|R}\prob{R}\\
    & =  \Omega((m/e)^{m-1}) (1- O(1/m)) \\
    & = \Omega((m/e)^{m-1}).
  \end{align*}
\end{proof}

The previous theorem shows that there are instances for the cluster
based \oneplusone where the optimization time grows exponentially with
the number of clusters. In the next section, we will compare the two
different representations for GMSTP and show that they have
complementary capabilities.

\subsection{Complementary Abilities}
\label{sec:complement}

The two representations examined in the previous sections
significantly differ from each other. They both rely on the fact that
there is a deterministic algorithm which solves the lower level
problem in polynomial time. In this section, we want to examine the
differences between the two approaches.  We show that both
representations have complementary abilities and do this by examining
the algorithms on each others hard instance. Surprisingly, we find out
that the hard instance for one algorithm becomes easy to solve when
giving it as an input to the other algorithm.

In Section~\ref{sec:span}, we have shown a lower bound of
$\Omega(n^m)$ for the cluster based \oneplusone using the
\emph{Spanning Node Representation}.  The hard instance $G_{S}$ for
the cluster based \oneplusone given in Figure~\ref{fig:spanned}
consists of a central cluster to which all the other clusters are
connected. There are no other connections between the clusters. Hence,
there is only one spanning tree when working with the \emph{Global
  Structure Representation}. The dynamic programming algorithm that
runs on the lower layer of the tree based \oneplusone therefore solves
the problem in its first iteration.

The following theorem shows that these instances are easy to be
optimised by the tree based \oneplusone.
\begin{theorem}
  The tree based \oneplusone solves the instance $G_{S}$
  in expected constant time.
\end{theorem}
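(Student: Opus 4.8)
The plan is to observe that the hard instance $G_{S}$ for the Spanning Nodes Representation is, from the point of view of the Global Structure Representation, essentially trivial because its cluster-level graph $H$ admits only one spanning tree with finite cost. Indeed, $G_{S}$ is bipartite: every edge joins a peripheral cluster to the central cluster, so in $H$ the only edges present connect $v_{\text{central}}$ to each $v_i$, and the unique finite-cost spanning tree of $H$ is the star centred at the central cluster. Since Algorithm~\ref{alg:tree} is initialised with a spanning tree of $H$ chosen uniformly at random among spanning trees of finite cost, the very first solution already has the correct global (cluster-level) structure $T^*$.

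First I would make precise that for this star structure, the dynamic programming routine of Pop~\citep{Pop04} invoked in the first line of Algorithm~\ref{alg:tree} computes, in a single call, an optimal choice of spanned nodes $P$ for that fixed tree shape — namely the node in each cluster minimising the total cost of the induced star. On $G_{S}$ this is exactly the all-optimal-nodes selection, of cost $m-1$, which is the global optimum of the GMSTP instance (any suboptimal peripheral node forces an edge of cost $2$ or $n^2$, and a suboptimal central node forces edges of cost $n$; the star on all optimal nodes is strictly cheaper). Hence after the initial dynamic-programming step the algorithm already holds an optimal solution $(P,T)$.

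Next I would argue that the optimisation time, measured as the number of fitness evaluations until an optimal solution is first obtained, is therefore $1$ — or, if one counts the initialisation as iteration zero and insists on at least one loop iteration being evaluated, it is at most a constant, independent of $n$ and $m$. This gives the claimed bound of expected constant time. The only mild subtlety to address is whether the edge-swap mutation could ever move the tree off the star: since every non-star spanning tree of $H$ uses an $\infty$-cost edge, any offspring $T'$ differing from the star has infinite cost and is rejected by the selection step, so the algorithm stays at the optimum forever; this is not needed for the upper bound on hitting time but I would mention it for completeness.

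The main (and essentially only) obstacle is a modelling/pedantry point rather than a mathematical one: pinning down exactly what ``expected constant time'' means given that the optimum is found during initialisation. I would resolve it by recalling the convention fixed in Section~\ref{sec:prelim} that the optimisation time counts fitness evaluations, noting that the dynamic-programming call in the first line is (at most) one evaluation, and concluding $\expect{T}=O(1)$. No probabilistic analysis is really required, since the conclusion holds deterministically regardless of which finite-cost spanning tree is drawn at initialisation — there is only one.
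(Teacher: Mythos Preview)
Your proposal is correct and follows exactly the same idea as the paper: since $G_S$ is bipartite between the central cluster and the peripherals, the cluster graph $H$ has a unique finite-cost spanning tree (the star), so the initial solution is already optimal. The paper's own proof is essentially your first paragraph compressed into one sentence; your additional remarks on why dynamic programming recovers the all-optimal-nodes selection and on the fate of mutated offspring are valid but unnecessary elaborations.
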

\begin{proof}
  There is only a single tree over the cluster graph. Hence, the
  algorithm selects the optimal tree in the initial iteration.
\end{proof}

For the tree based \oneplusone, working with the \emph{Global
  Structure Representation}, we showed that it finds the instance
$G_{G}$ given in Figure~\ref{fig:global-new} hard to solve.  Working
with the \emph{Spanning Nodes Representation}, there is only one
cluster that consists of two nodes where all the other clusters
contain exactly one node. Hence, an optimal solution is obtained by
computing a minimum spanning tree on the lower level if the right node
in the cluster of two nodes is chosen.  The following theorem
summarises this and shows that this instance become easy when working
with the cluster based \oneplusone.

\begin{theorem}
  The cluster based \oneplusone solves the instance $G_{G}$ in
  expected time $O(m)$.
\end{theorem}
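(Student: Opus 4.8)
The plan is to notice that under the \emph{Spanning Nodes Representation} the instance $G_G$ has only one genuine decision variable, which reduces the process to a two-state Markov chain, and then to bound the time to reach its absorbing optimal state. First I would enumerate the reachable solutions: every cluster $V_i$ with $i\ge 2$ is a singleton, so the upper level only decides whether cluster $V_1$ spans $v_{11}$ or $v_{12}$; call the resulting solutions $A$ and $B$.

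Next I would read off the lower-level minimum spanning tree cost in each case. In solution $B$ the only finite-cost edges are $\{v_{12},v_{21}\}$ of cost $1/2$ and the $m-2$ edges from $v_{21}$ to the peripheral nodes of cost $1+1/(2m)$, so $C(T)=1/2+(m-2)(1+1/(2m))=m-1-1/m$. In solution $A$ the available finite edges are the $m-2$ edges from $v_{11}$ to the peripheral nodes of cost $1$ together with one edge of cost $1+1/(2m)$ needed to attach $v_{21}$, so $C(T)=(m-2)+1+1/(2m)=m-1+1/(2m)$. (This agrees with the preceding theorem on $G_G$, which already singles out the tree through $e=\{v_{12},v_{21}\}$ as optimal.) Hence $B$ is the unique optimum, strictly cheaper than $A$ by $3/(2m)$, and since $A$ is the only other possible solution the elitist selection of the \oneplusone never leaves $B$; that is, $B$ is absorbing.

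It then suffices to bound the time to go from $A$ to $B$. In one mutation step cluster $V_1$ is selected for mutation with probability $1/m$ and, conditioned on this, its new spanned node equals $v_{12}$ with probability $1/|V_1|=1/2$, while mutating any other (singleton) cluster has no effect; so each iteration moves from $A$ to $B$ with probability $1/(2m)$, i.e. the number of iterations needed is geometric with mean $2m$. Finally, using the random initialisation, the run starts in $B$ with probability $1/2$ and in $A$ with probability $1/2$, so by the law of total expectation the expected optimisation time is at most $\tfrac12\cdot 0+\tfrac12\cdot 2m=O(m)$. The only step needing a little care is the cost comparison establishing that $B$ is strictly better than $A$ — this is what makes $B$ absorbing and makes the two-state argument go through — but it is a short calculation, so I do not expect a real obstacle.
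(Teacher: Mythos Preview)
Your proposal is correct and follows essentially the same argument as the paper: observe that only $V_1$ offers a real choice, so the process reduces to waiting for a mutation that selects $V_1$ and picks $v_{12}$, which happens with probability at least $1/(2m)$ per step. The paper's proof is terser and simply asserts which node is optimal, whereas you add the explicit MST cost comparison between $A$ and $B$; this extra verification is sound and arguably makes the argument more self-contained, but it is not a different method.
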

\begin{proof}
  Cluster $V_1$ contains two nodes, and all other clusters contain a
  single node. If the initial solution is not already the optimal
  solution, the correct node of $V_1$ has to be selected using
  mutation. The node for the cluster $V_1$ is changed with probability
  $1/m$ and in such a step the correct node is selected with
  probability $1/2$. Hence, the probability of a mutation leading to
  an optimal solution is at least $\frac{1}{2m}$ and the expected
  waiting time for this event is $O(m)$.
\end{proof}

The investigations in this section show that the two examined
representations have complementary abilities. Switching from one
representation to the other one can significantly reduce the runtime.

\section{Generalised Travelling Salesman Problem}
\label{sec:tsp}
We now turn our attention to the NP-hard generalized traveling salesperson problem (GTSP).
Given a complete graph $G=(V,E, c)$ with a cost function $c \colon E \rightarrow \mathds{R}^+$ and a partitioning of the node set $V$ into $m$ clusters $V_i$, $1\leq i \leq m$, the goal is to find a cycle of minimal cost that contains exactly one node from each cluster.

The bi-level approach that we are studying is similar to the one
discussed in the previous section. We investigate the \emph{Global Structure Representation} which works on the complete graph 
$H = (V', E')$ obtained from the input graph $G=(V,E, c)$.  The node 
$v_i \in V'$, $1 \leq i \leq m$, represents the cluster $V_i$ of $G$.

The upper level solution in the \emph{Global Structure Representation}
is a Hamiltonian tour $\pi$ on $H$ and the lower level solution is a set of
nodes $P=(p_1,\ldots, p_m)$ with $p_i \in V_i$ that minimises the cost
of a Hamiltonian tour which connects the clusters in the same way as
$\pi$. Given the restriction imposed by the Hamiltonian tour $\pi$ of $H$, 
finding the optimal set of nodes $P$ can be done in time $O(n^3)$ by
using any shortest path algorithm. One such algorithm is
\textit{Cluster Optimisation} proposed initially by Fischetti et
al~\citep{ClusterOpt1997} and is widely used in the literature.
Let $\pi= (\pi_1, \ldots, \pi_m)$ be a permutation on the $m$ clusters and $p_i$ be the chosen node for cluster $V_{\pi_i}$, $1\leq i \leq m$. Then the cost of the tour $\pi$ is given by 

$$
c(\pi) = c(p_m,p_{1}) +\sum_{i=1}^{m-1} c(p_i,p_{i+1}).
$$


\begin{algorithm}[t]
  \caption{Tour-based \oneplusone}
  \begin{algorithmic}[1]
    \STATE Choose a random permutation $\pi$ (which is also a Hamiltonian tour) of the $m$ given clusters.
    \STATE Find the set of nodes P (one node in each cluster) to build the shortest path possible among those clusters with the given order, by means of any shortest path algorithm in time $O(n^3)$.
    \WHILE{termination condition not satisfied}
    \STATE {$\pi'\leftarrow \pi$ }
    \FOR{$i\in[K]$ where $K\sim 1+ Pois(1) $}
    \STATE Choose two nodes from $\pi'$ uniformly at random.
    \STATE{ $\pi'\leftarrow$ Perform the \emph{Jump} with the chosen nodes on $\pi'$}
    \ENDFOR
    \STATE{Find the set of nodes $P'=(p'_1, \ldots,p'_m)$ which minimizes the cost with respect to $\pi'$ in the lower level}
    \IF{$c(\pi') \leq c(\pi)$}
    \STATE {$P\leftarrow P'$}
    \STATE {$\pi\leftarrow \pi'$}
    \ENDIF
    \ENDWHILE
  \end{algorithmic}
\label{alg:tour}
\end{algorithm}

Our proposed algorithm starts with a random permutation of clusters which is always a Hamiltonian tour $\pi$, in a complete graph $H$.
In each iteration, a new solution
$\pi'$ of the upper layer is obtained by the commonly used \emph{Jump}
operator which picks a node and moves it to a random position in the permutation.  The number of jump operations carried out in a mutation step is chosen according to $1+ Pois(1)$, where $Pois(1)$ denotes the Poisson distribution with expectation $1$.  Although we are using the jump operator in these investigations, we would like to mention that similar results can be obtained for other popular mutation operators such as \textit{exchange} and \textit{inversion}.

\begin{theorem}
The expected optimization time of the tour based \oneplusone is
  $O(m!m^{2m})$.
\end{theorem}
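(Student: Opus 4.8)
The plan is to mirror the argument used for the tree-based \oneplusone on the GMSTP and transfer Fischetti et al.'s cluster-optimisation result to the tour-based \oneplusone. The key observation is that once the upper-level permutation $\pi$ coincides (up to rotation and reflection) with the permutation $\pi^*$ underlying some optimal solution $(\pi^*, P^*)$, the lower-level shortest-path computation automatically returns an optimal choice of spanned nodes. Hence it suffices to bound the expected time for the upper level to reach such a $\pi^*$. I would fix one optimal permutation $\pi^*$ once and for all, and for any current non-optimal permutation $\pi$ argue that a single mutation step can convert $\pi$ into $\pi^*$ using at most $m-1$ \emph{Jump} operations (moving the clusters into their correct positions one at a time suffices, since after placing $m-1$ of them the last one is forced).

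Next I would lower-bound the probability of such a step. The number $K$ of jumps in a mutation is $1 + \Pois(1)$, so $\prob{K = m-1} = \prob{\Pois(1) = m-2} = 1/(e(m-2)!)$, which is at least $1/(e \cdot m!)$ for large $m$ (any crude bound of this shape works). Conditioned on performing exactly $m-1$ jumps, each individual jump must pick the right node to move and the right target position; since there are $m$ nodes and at most $m$ positions, each jump succeeds with probability at least $1/m^2$, so the whole targeted sequence succeeds with probability at least $m^{-2(m-1)}$. Multiplying, the probability that any fixed non-optimal $\pi$ jumps to $\pi^*$ in one step is at least
\begin{align*}
  \frac{1}{e \cdot m!} \cdot m^{-2(m-1)} \;\geq\; \frac{1}{e} \, m!^{-1} m^{-2m},
\end{align*}
so by the standard waiting-time argument for elitist (1+1)-type algorithms the expected optimisation time is $O(m! \, m^{2m})$. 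One can also append a tail bound of the form $(1 - (1/e) m!^{-1} m^{-2m})^{e k m! m^{2m}} \leq e^{-k}$, exactly as in the GMSTP theorem.

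The one genuinely delicate point is the claim that $m-1$ jumps always suffice to turn an arbitrary permutation into $\pi^*$, together with getting the per-jump success probability right: after some prefix of jumps has been executed the permutation has changed, so "the right node" and "the right position" refer to a moving target, and one must be careful that the counting argument (probability $\geq 1/m^2$ per jump) still applies at every intermediate step and that the positions chosen genuinely realise the intended sub-permutation. I expect the cleanest way to handle this is an explicit insertion-sort-style schedule: repeatedly take the cluster that should occupy the first not-yet-fixed slot and jump it there, which never disturbs the already-fixed prefix; this makes each of the $m-1$ jumps a well-defined (node, position) pair and keeps the analysis honest. Everything else — the Poisson tail estimate and the geometric-series waiting-time bound — is routine and parallels the GMSTP proof already in the paper.
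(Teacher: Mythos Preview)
Your proposal is correct and follows essentially the same approach as the paper: lower-bound the probability that a single mutation step transforms the current permutation directly into a fixed optimal permutation $\pi^*$ via a prescribed sequence of \emph{Jump} operations, then take the reciprocal. The only minor differences are cosmetic: the paper uses $m$ jumps (jumping $\pi^*_i$ to position $i$ for $i=1,\ldots,m$) and the Poisson factor $1/(e(m-1)!)$, whereas you use $m-1$ jumps and $1/(e(m-2)!)$; both lead to the same $O(m!\,m^{2m})$ bound, and your explicit insertion-sort schedule is exactly the argument the paper implicitly relies on.
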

\begin{proof}
We consider the probability of obtaining the optimal tour $\pi^*$ on the global graph $H$ from an arbitrary tour $\pi$. The number of \emph{Jump} operations required is at most $m$ (the number of clusters). The probability of picking the right node and moving it to the right position in each of those $m$ operations is at least $1/m^2$. 
We can obtain an optimal solution by carrying out a sequence of $m$ jump operations where the $i$th operation jumps element $\pi^*_i$ in $\pi$ to position $i$. 
Since the probability of $Pois(1)+1=m$ is  $1/(e(m-1)!)$, the probability of a specific sequence of $m$ \emph{Jump} operations to occur is bounded below by
 $$\frac{1}{e(m-1)!} \cdot \frac{1}{m^{2m}}.$$

Therefore, the expected waiting time for such a mutation is
 $$\left(\frac{1}{e(m-1)!} \cdot \frac{1}{m^{2m}} \right)^{-1}= O(m!m^{2m})$$
which proves the upper bound on the expected optimization time.
\end{proof}

 Note that this upper bound depends on the number of clusters. Since the computational effort required to assess the lower level problem is polynomial in input size, $O(n^3)$, this implies that the proposed algorithm is a fixed-parameter evolutionary algorithm for the GTSP problem and the parameter $m$, the number of clusters.

 So far we have found an upper bound for the expected time of finding
 an optimal solution using the presented algorithm. In this section we
 will find a lower bound for the optimization time. Figure
 \ref{fig:WorstGlobal} illustrates an instance of GTSP, $G_G$, for
 which finding the optimal solution is difficult by means of the
 presented bi-level evolutionary algorithm with Global Structure
 Representation. In this graph, each cluster has two nodes. On the
 upper layer a tour for clusters is found by the EA and on the lower
 layer the best node for that tour is found within each cluster. All
 white nodes (which represent sub-optimal nodes) are connected to each
 other, making any permutation of clusters a Hamiltonian tour even if
 the black nodes are not used. All such connections have a weight of
 $1$, except for those which are shown in the figure which have a
 weight of $2$. All edges between a black node and a white node and
 also all edges between black nodes have weight $m^2$, except the ones
 presented in the figure which have weight $1/m$.  An optimal solution of
 cost $1$ uses only edges of cost $1/m$ whereas local optimal
 solutions use only edges of cost $1$.  The tour comprising all black
 nodes in the same order as illustrated in
 Figure~\ref{fig:WorstGlobal} is the optimal solution.  Note that
 there are many local optimal solutions of cost $m$. For our analysis
 it is just important that they do not share any edge with an optimal
 solution.

\begin{figure}[t]
 \centering
 \includegraphics[width=200pt, height=200pt]{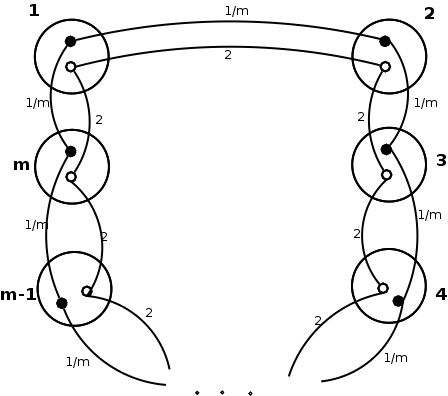}
 \caption{Hard instance $G_G$ for GTSP with \textit{global structure  representation}}
 \label{fig:WorstGlobal}
\end{figure}

The clusters are numbered in the figure, and a measure $S$ for
evaluating cluster orders is based on this numbering: Let $\pi=(\pi_1,
\ldots, \pi_m)$ represent the permutation of clusters in the upper
layer, then $S(\pi)=|\{i \mid \pi_{(i+1 \: \mod \: m)}=(\pi_i+1) \;
\mod \; m\}|$ indicates the similarity of the permutation
with the optimal permutation.  A large value of $S(\pi)$ means that
many clusters in $\pi$ are in the same order as
in the optimal solution.
Note that $S(\pi^*)=m$ for an optimal solution $\pi^*$.
A solution $\pi$ with $S(\pi)=0$ is locally optimal in the sense that
there is no strictly better solution in the neighbourhood induced by
the jump operator. The solutions with $S(\pi)=0$ form a plateau where all solutions differ from the optimal solution by $m$ edges.

We first introduce a lemma that will later help us with the proof of the lower bound on the optimization time.

\begin{lemma}
\label{lemma:localopt}
Let $\pi$ and $\pi'$ be two non-optimal cluster permutations for the
instance $G_G$. If $S(\pi')>S(\pi)$ then $c(\pi')>c(\pi)$.
\end{lemma}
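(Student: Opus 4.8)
The plan is to prove the sharper claim that for \emph{every} non-optimal cluster permutation $\sigma$ on $G_G$ the optimal lower-level cost equals
$$ c(\sigma) \;=\; m + S(\sigma), $$
i.e.\ the baseline cost $m$ plus the number $S(\sigma)$ of tour edges that coincide with an edge of the optimal tour. Granting this, the lemma is immediate: if $\pi,\pi'$ are non-optimal with $S(\pi')>S(\pi)$, then $c(\pi')=m+S(\pi')>m+S(\pi)=c(\pi)$.

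The first --- and only delicate --- step is to show that for a non-optimal $\sigma$ the lower-level optimiser (Cluster Optimisation) must select the white (sub-optimal) node in \emph{every} cluster. The relevant structural fact about $G_G$ is that the only edges incident to a black node whose weight is below $m^2$ are the $m$ edges of the optimal tour, i.e.\ the edges joining the black node of cluster $i$ to the black node of cluster $i+1 \bmod m$. Suppose, for contradiction, that some node selection for $\sigma$ uses at least one black node yet has total cost below $m^2$. Then \emph{every} tour edge has weight below $m^2$, so both tour edges incident to each chosen black node are optimal-tour edges; starting from any chosen black node and following these forced edges around the tour shows that every cluster must use its black node and that the tour is precisely the optimal tour (traversed in one of its two directions) --- contradicting that $\sigma$ is non-optimal. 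Hence every selection using a black node costs at least $m^2$, which for $m\ge 3$ exceeds the cost (at most $2m$) of the all-white selection, and the latter is always available since the white nodes form a clique. Therefore $c(\sigma)$ equals the cost of the all-white selection.

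It remains to evaluate the all-white selection, which is routine. With white nodes chosen everywhere, a tour edge $\{\sigma_k,\sigma_{k+1}\}$ carries weight $2$ precisely when it coincides with an edge of the optimal tour and weight $1$ otherwise, so $c(\sigma)=2S(\sigma)+(m-S(\sigma))=m+S(\sigma)$, as claimed. Applying this to both $\pi$ and $\pi'$ completes the argument.

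The main obstacle is the propagation argument in the first step: one must rule out \emph{all} partially-black lower-level selections for a non-optimal tour, and the care lies in verifying that the \emph{only} configurations in which every black-incident tour edge is cheap are the two orientations of the optimal black tour --- both of which are globally optimal, hence excluded by the hypothesis that the permutations in question are non-optimal. Once that is in place, everything else reduces to comparing the weights $1/m$, $1$, $2$, and $m^2$.
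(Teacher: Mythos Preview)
Your proof is correct and follows essentially the same approach as the paper: both establish that for any non-optimal permutation the lower level must select all white nodes (since any black node forces an edge of weight $m^2$ unless the tour is globally optimal), then compute the all-white cost as $m+S(\sigma)$ and conclude. Your propagation argument unifies what the paper splits into two brief cases (mixed black/white versus all-black on a non-optimal tour), but the underlying idea and structure are the same.
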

\begin{proof}
In the given instance, all white nodes are connected to each other with a maximum weight of 2. These connections ensure that any permutation of the clusters, can result in a Hamiltonian tour with a cost of at most $2m$. Moreover, all connections between white nodes and black nodes have a weight of $m^2$. So the lower level will never choose a combination of white and black nodes because the cost will be more than $m^2$ while there is an option of selecting all white nodes with the cost of at most $2m$. On the other hand, for any permutation of clusters other than the Global Optimum, the lower level will not choose any black nodes, because it will not be possible to use all the $1/m$ edges and some $m^2$-weighted edges will be used again.
Let $a=S(\pi)$ be the number of clusters adjacent to each other correctly from the right side (having the same right-side neighbour as in the Global Optimum) in a solution $\pi$. Then $b=m-a$ is the number of clusters which have a different neighbour on their right. If $\pi$ is not the optimal solution, then the lower level will choose all white nodes. As a result, $a$ edges with weight 2 and $b$ edges with weight 1 will be used in that solution; therefore, the total cost of solution $\pi$ will be $c(\pi)=2a+b=2a+m-a= m+a$.
Consider a solution $\pi'$ with $a'=S(\pi')$ and $S(\pi') > S(\pi)$. We have $c(\pi') = m + a' > m+a = c(\pi)$ which completes the proof.
\end{proof}

Lemma~\ref{lemma:localopt} shows that any non-optimal offspring $\pi'$ of a solution $\pi$ is not accepted
if it is closer to an optimal solution $\pi^*$. This means that the algorithm finds it hard to obtain an optimal solution for $G_G$ and leads to an exponential lower bound on the optimization time as shown in the following theorem.

\begin{theorem}
\label{thm:GlobalWorst2}
Starting with a permutation of clusters chosen uniformly at random,  the optimisation time of the tour based \oneplusone on $G_G$ is $\Omega((\frac{m}{2})^\frac{m}{2})$ with probability $1-e^{-\Omega(m)}$.
\end{theorem}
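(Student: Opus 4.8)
The plan is to mirror the structure of the lower-bound proof for the GMSTP \emph{Global Structure Representation}, using Lemma~\ref{lemma:localopt} as the key monotonicity tool. First I would argue that with probability $1-e^{-\Omega(m)}$ the random initial permutation $\pi$ has $S(\pi)$ bounded away from $m$ — in fact $S(\pi) \le m/2$ — since each position contributes to $S$ only with probability $\Theta(1/m)$ (the right-neighbour must be the cyclically next cluster), so $\expect{S(\pi)} = O(1)$ and a Chernoff/Markov bound gives that $S(\pi)$ is small except with exponentially small probability. In particular the run starts on a non-optimal solution that differs from $\pi^*$ in at least $m/2$ edges.

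Next I would invoke Lemma~\ref{lemma:localopt}: as long as the current solution $\pi$ is non-optimal, any accepted offspring $\pi'$ must satisfy $c(\pi')\le c(\pi)$, hence $S(\pi')\le S(\pi)$; so the similarity measure $S$ is non-increasing along the accepted trajectory until the optimum is hit directly. Therefore the only way to reach $\pi^*$ is a single mutation step that jumps from a solution with $S\le m/2$ straight to $\pi^*$, which has $S=\pi^*)=m$. Such a step must correct at least $m/2$ edges simultaneously, and since each \emph{Jump} operation changes only a bounded number of adjacencies, this requires $K=\Omega(m)$ jump operations in one mutation, i.e. at least $m/2$ (being slightly careful about how many incorrect adjacencies one jump can fix — a single jump removes one element and reinserts it, altering at most three adjacencies, so $\Omega(m)$ jumps are genuinely needed).

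I would then bound the per-step probability of such a direct jump. The event $K = 1+\Pois(1) \ge cm$ has probability at most $2(e/(cm))^{cm} = (m/e)^{-\Omega(m)}$ by Lemma~\ref{lemma:poisson-tail} (applied with the shifted Poisson), and even conditioned on enough jumps, each individual jump picks the correct element and correct target position with probability $O(1/m^2)$, contributing a further $m^{-\Omega(m)}$ factor. Multiplying, the probability of reaching $\pi^*$ in any single iteration, from a non-optimal state, is at most $(m/2)^{-\Omega(m)}$; a clean bound of the form $(2/m)^{m/2}\cdot\poly$ suffices. A union bound over the first $T = c'(m/2)^{m/2}$ iterations then shows the optimum is not found within $T$ steps with probability $1-o(1)$, and combining with the $1-e^{-\Omega(m)}$ probability of the good start via the law of total probability gives the claimed bound.

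The main obstacle I anticipate is the careful accounting in the middle step: turning ``$S$ is non-increasing along accepted moves (Lemma~\ref{lemma:localopt})'' plus ``$S$ starts at $\le m/2$'' into ``the only route to the optimum is a single mutation correcting $\ge m/2$ adjacencies'', and then quantifying how many jump operations that forces. One must rule out the possibility of gradually approaching $\pi^*$ through a chain of \emph{equal-cost} accepted moves that each slightly change the edge set without changing $S$ — here the structure of $G_G$ (local optima form a plateau whose solutions all differ from $\pi^*$ in exactly $m$ edges, and Lemma~\ref{lemma:localopt} forbids strictly increasing $S$) is what closes the gap, but I would state this plateau/monotonicity argument explicitly before moving to the probabilistic estimates. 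The Poisson tail and the $1/m^2$-per-jump estimates are routine given Lemma~\ref{lemma:poisson-tail} and the analysis already carried out for the upper bound.
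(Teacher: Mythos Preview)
Your monotonicity idea is correct, and your worry about plateaus is actually unfounded for your own argument: equal-cost accepted moves keep $S$ constant, so the invariant $S\le S(\pi_{\text{init}})$ is maintained throughout and no chain of plateau moves can bring you closer to $\pi^*$ in the $S$-metric. The gap is quantitative, and it is exactly the piece of the paper's proof that you chose to skip. From $S\le m/2$ you can only force $\ge m/6$ \emph{Jump} operations in the decisive step (each jump alters at most three adjacencies), which yields a per-step success probability of order $m^{-cm}$ with $c\le 1/4$ (following the paper's own estimate with $\varepsilon=1/2$). A union bound over $T=(m/2)^{m/2}$ iterations then gives $T\cdot m^{-cm}=2^{-m/2}m^{(1/2-c)m}$, which diverges for every $c<1/2$; indeed, even your stated target ``$(2/m)^{m/2}\cdot\poly$'' already fails, since multiplied by $(m/2)^{m/2}$ it leaves a $\poly(m)$ factor rather than something that is $e^{-\Omega(m)}$.

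The paper closes this gap with an additional preliminary phase of length $O(m^3)$ in which it shows, with probability $1-e^{-\Omega(m)}$, that the algorithm actively drives $S$ all the way down to~$0$: each surviving correct adjacency is destroyed with probability $\Omega(1/m)$ per step, and Lemma~\ref{lemma:localopt} guarantees such cost-decreasing moves are accepted, so the expected time to clear all of them is $O(m^2)$, boosted to high probability by $m$ repetitions of Markov's inequality. Only from the plateau $S=0$ does a direct jump have to repair all $m$ adjacencies, forcing $\ge m/3$ operations and giving the tight per-step bound $e^{-1}m^{-m/2}$; this is precisely what survives the union bound over $(m/2)^{m/2}$ iterations with slack $2^{-m/2}=e^{-\Omega(m)}$. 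So the convergence-to-local-optimum phase you omitted is not cosmetic --- it is what buys the exponent $m/2$ and the $1-e^{-\Omega(m)}$ probability in the stated theorem.
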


\begin{proof}
Considering $G_G$ illustrated in Figure \ref{fig:WorstGlobal}, the optimal solution is the tour comprising all edges with weight $\frac{1}{m}$. We consider a typical run of the algorithm consisting of a phase of $T=Cm^3$ steps where $C$ is an appropriate constant. For the typical run we show the following:

\begin{enumerate}
\item A local optimum $\pi$ with $S(\pi)=0$ is reached with probability $1-e^{-\Omega(m)}$

\item The global optimal solution is not obtained with probability $1-m^{-\Omega(m)}$
\end{enumerate}
Then we state that only a direct jump from the local optimum to the
global optimum is possible, and the probability of this event is
$O(m^{-m/2})$.

First we show that with high probability $S(\pi_{init})\leq \varepsilon m$ holds for the initial solution $\pi_{init}$, where $\varepsilon$ is a small positive constant.

We count the number of permutations in which at least $\varepsilon m$,
$\varepsilon>0$ a small constant, of cluster-neighbourhoods are
correct. 

We should select $\varepsilon m$ of the clusters to be
followed by their specific neighbour, and consider the number of
different permutations of $m-\varepsilon m$ clusters:
\begin{equation}
\binom{m}{\varepsilon m} (m-\varepsilon m)!\label{equ:NoOfSolutions} 
\end{equation}

Some solutions are double-counted in this expression, so the actual
number of different solutions with $S(\pi)\geq \varepsilon m$ is less
than (\ref{equ:NoOfSolutions}). Therefore, the probability of having
more than $\varepsilon m$ clusters followed by their specific cluster,
is at most

$$\binom{m}{\varepsilon m} \frac{(m-\varepsilon m)!}{m!} =((\varepsilon m)!)^{-1} = O\left( \left(\frac{\varepsilon m}{2} \right)^{-\frac{\varepsilon m}{2}} \right) $$ 

Hence, with probability $1-O((\frac{\varepsilon m}{2})^{-\frac{\varepsilon m}{2}})$, $S(\pi_{init})\leq \varepsilon m$ holds and the initial solution has at at most $\varepsilon m$ correctly ordered clusters.

Now we analyze the expected time to reach a solution $\pi$ with
$S(\pi)=0$. The probability of a good ordering to change to a bad one
is at least

$$\left(\frac{1}{e} \right) \cdot \left(\frac{k}{m} \right) \cdot \left(\frac{m^2-m}{m^2} \right)^k$$
where $k$ is the number of edges which can be changed in each
operation. For \textit{jump} operation $k$ equals $3$. For all $m>2$,
it holds that $m<\frac{m^2}{2}$, so the probability above is at least

$$ \left(\frac{1}{e} \right)  \cdot \left(\frac{3}{m} \right ) \cdot \left(\frac{1}{2} \right)^3=\Omega(m^{-1})$$

Therefore, the expected time for each edge to be replaced with a bad edge is in $O(m)$ and for $m$ edges it is in $O(m^2)$. 

Now we consider a phase of $T=Cm^3$ iterations and show that the local optimum is reached with high probability.

Let $C=2C'$ and consider a phase of $2C'm^2$ iterations while assuming that the local optimum is expected to be reached in time $C'm^2$. Then by means of Markov's Inequality we have
$$\Pr(T'>2C'm^2)\leq \frac{1}{2}.$$

Repeating this $m$ times, the probability of not reaching the local
optimum is $2^{-m}$. Therefore, the algorithm reaches the
local optimum with probability
$1-2^{-m}=1-e^{-\Omega(m)}$ during the phase of
$T=Cm^3$ steps.

To prove that with high probability, the global optimum is not reached
during the considered phase, note first that by Lemma \ref{lemma:localopt},
any jump to a solution closer to the optimum other than directly to the Global Optimum will be rejected.
  
Furthermore, for the initial
solution $S(\pi_{init})\leq \varepsilon m$. Therefore, only
non-optimal solutions $\pi$ with $S(\pi)\leq \varepsilon m$ are
accepted by the algorithm. In order to obtain an optimal solution the
algorithm has to produce the optimal solution from a solution $\pi$
with $S(\pi)\leq \varepsilon m$ in a single mutation step.  We now
upper bound the probability of such a direct jump which changes at
least $(1-\varepsilon)m$ clusters to their correct order. Such
a move needs at least $\frac{(1-\varepsilon)m}{3}$ operations in the
same iteration. Taking into account that these \emph{Jump} operations
may be acceptable in any order, the probability of a direct jump is at
most
\begin{equation}
\frac{1}{e \left(\frac{(1-\varepsilon)m}{3} \right)!} .\frac{1}{m^{\frac{(1-\varepsilon)m}{2}}} \cdot \left(\frac{(1-\varepsilon)m}{3} \right)! = m^{-\Omega(m)}. \label{equ:withepsilon}
\end{equation}

So in a phase of $O(m^3)$ iterations the probability of having such a direct jump is by union bound at most $m^{-\Omega(m)+3} = m^{-\Omega(m)}$.


So far we have shown that a local optimum $\pi$ with $S(\pi)=0$ is reached with probability $1 - e^{-\Omega(m)}$ within the first $T=Cm^3$ iterations.

The probability of obtaining an optimal solution from a solution $\pi$ with $S(\pi)=0$ is at most
$$\frac{1}{e \left(\frac{m}{3} \right)!} \cdot\frac{1}{m^{\frac{m}{2}}} \cdot \left(\frac{m}{3} \right)!=e^{-1} \cdot m^{-\frac{m}{2}}$$

We now consider an additional phase of $(\frac{m}{2})^\frac{m}{2}$ steps after having obtained a local optimum. Using the union bound,  the probability of reaching the global optimum in this phase is at most

$$\left(\frac{m}{2} \right)^\frac{m}{2} \cdot e^{-1} \cdot m^{-\frac{m}{2}} \leq \left(\frac{1}{2}\right)^{\frac{m}{2}}.$$ 
 
 As a result, the probability of not reaching the optimal solution in these $(\frac{m}{2})^\frac{m}{2}$ iterations is $1-2^{-\frac{m}{2}}= 1-e^{-\Omega(m)}$. Altogether, the optimization time is at least $(\frac{m}{2})^\frac{m}{2}$ with probability $1-e^{-\Omega(m)}$.
\end{proof}

\section{Discussion of Generalisations}
\label{sec:gen}

The problems we have examined in this work are bilevel optimisation
problems where the upper level problem, namely the \emph{leader}, and
the lower level problem, the \emph{follower}, shares an objective
function. The general bilevel optimisation problem also includes the
setting where the leader and the follower have different
objectives. Given the decision of the leader, the follower makes a
decision according to his objective function which might be
conflicting with the objective function of the leader. An example of
such a problem is where the leader places toll booths across a road
network and the followers try to find the cheapest way from a point
\emph{A} to a point \emph{B} by finding a path that avoids as many
toll booths as possible. Here, the leader can only learn the objective
function value of its decision after the follower picks the optimum
path. Unlike the GMSTP and GTSP, the objective functions of upper and
lower level problems are conflicting in this toll booth problem.

For a given solution visited in the upper level problem, the
evaluation cost is, in the worst case, the computational complexity of
the lower level problem. If the lower level problem can be solved in
polynomial time, then a fixed-parameter bound on the the size of the
upper level solution is sufficient for a fixed-parameter tractable
problem. For when the upper level solution is bounded by a parameter
\emph{k} of the original problem, any global random search heuristic
on the upper level problem will be able to find the optimal upper
level solution in no more than $f(k)$ iterations for some function
$f(k)$ and will make $f(k)\cdot \poly(n)$ basic operations in total.

In our case, the \emph{Global structure representation} of GMSTP and
GTSP, the size of an upper level solution is bounded above by $m^2$
since it is enough to indicate whether any two clusters are connected
or not to precisely define a solution. On the other hand the
spanned-nodes representation of GMSTP needs a size of $m\log(n)$ to
represent \emph{which node} is selected in \emph{each
  cluster}. If the solution size is restricted by a parameter $m$,
uniform random search on the bitstring of length $O(f(m))$ will
find the optimal solution in $2^{O(f(m))}$ iterations in expectation.
With \emph{Global structure representation}, if we pick our solutions
uniformly at random the probability of picking a unique optimal
solution is $(1/2)^{m^2}$ which will occur in $O(m^{2m})$ time in
expectation while uniform random search with the spanned node
representation takes $\Omega (n^{m})$ trials in expectation.

\section*{Conclusions}

Evolutionary bilevel optimization has gained an increasing interest in recent years. With this article we have contributed to the theoretical understanding by considering two classical NP-hard combinatorial optimization problems, namely the generalized minimum spanning tree problem and the generalized traveling salesperson problem.
We studied evolutionary algorithms for the mentioned problems in the parameterized setting. 
Using parameterised computational complexity analysis of evolutionary algorithms for the generalized minimum spanning tree problem,
we have examined two representations for the upper layer solutions and
their corresponding deterministic algorithms for the lower layer. Our
results show that the \emph{Global Structure Representation} leads to
fixed parameter evolutionary algorithms. By presenting hard instances
for each of the two approaches, we have pointed out where they run
into difficulties. Furthermore, we have shown that the two
representations for the generalized minimum spanning tree problem are highly complementary by proving that they are
highly efficient on the hard instance of the other algorithm. After having achieved these results for the generalized minimum spanning tree problem, we turned our attending to the generalized traveling salesperson problem. We showed that using the global structure representation leads to fixed parameter evolutionary algorithms with respect to the number of clusters. Furthermore, we pointed out a worst case instance where the optimization time grows exponential with respect to the number of clusters and discussed generalizations of the results.

\bibliographystyle{apalike}
\bibliography{myreferences}

\end{document}